%% file: DualQuant_arxiv.tex
\documentclass{article} 
\usepackage{iclr2027_conference,times}

\input{math_commands.tex}

\usepackage{hyperref}
\usepackage{url}
\usepackage[shortlabels]{enumitem}

\usepackage[utf8]{inputenc} 
\usepackage[T1]{fontenc}    
\usepackage{hyperref}       
\usepackage{url}            
\usepackage{booktabs}       
\usepackage{amsfonts}       
\usepackage{nicefrac}       
\usepackage{microtype}      
\usepackage{xcolor}         
\usepackage{amsmath}
\usepackage{algorithm}
\usepackage{algpseudocode}
\usepackage{multirow}
\usepackage{booktabs}
\usepackage{graphicx}
\usepackage{rotating}
\usepackage{comment}
\usepackage{amsthm}
\newtheorem{theorem}{Theorem}
\newtheorem{lemma}{Lemma}
\newtheorem{assumption}{Assumption}
\newtheorem{remark}{Remark}
\usepackage{float}
\usepackage{placeins}
\usepackage[table]{xcolor}
\usepackage{amssymb}
\usepackage{enumitem}
\usepackage{pifont}
\usepackage{pifont}
\usepackage{rotating}
\usepackage{subcaption}
\usepackage{xurl}           

\definecolor{tickgreen}{HTML}{59A14F}
\newcommand{\cmark}{\textcolor{tickgreen}{\ding{51}}}
\definecolor{xred}{HTML}{e15759}
\newcommand{\xmark}{\textcolor{xred}{\ding{55}}}
\newcommand{\hl}[1]{\cellcolor[HTML]{e3f4f7}#1}

\newcommand{\method}{\textsc{DualQuant}}

\newcommand{\mahsa}[1]{\textcolor{black}{#1}}
\definecolor{editgreen}{HTML}{1a7f37}
\newcommand{\green}[1]{#1}

\title{DualQuant: Dual-scale Calibration-free
Weight-and-Activation Quantization for LLMs}

\author{Mahsa Salmani, Sayeh Sharify \& Hesham Mostafa  \\
d-Matrix\\
}

\iclrfinalcopy 
\begin{document}

\maketitle
\lhead{}
\chead{}
\rhead{}
\renewcommand{\headrulewidth}{0pt}

\begin{abstract}
Low-bit inference on accelerators with native 4-bit support
requires quantizing both weights \emph{and} activations to
realize the full throughput gains these formats provide.
While post-training weight quantization has been extensively studied, 
activation quantization remains considerably more challenging 
and typically {requires} calibration data.
In this work, we introduce \method{}, a dual-scale, calibration-free
post-training quantization framework that directly
minimizes the weight-reconstruction Mean Squared Error (MSE) and,
as an immediate by-product, yields per-channel activation
rescalings prior to activation quantization.
Our key {observation} is that the scaling factors obtained by minimizing weight-reconstruction error 
also provide effective activation rescalings, 
enabling \mahsa{{competitive}} {activation} quantization without calibration data.
We formulate \method{}
as a non-differentiable weight-reconstruction problem and develop an efficient solver based on variable splitting and block coordinate descent, where each subproblem admits a closed-form update.
By treating the quantizer as a black box and
projecting scales onto their \emph{format-specific admissible set} after
each update, \method{} provides a unified algorithm supporting a wide range of formats, including 
INT4, INT8, the full MX family, and NVFP4. To our knowledge, \method{}
is the first dual-scaling method to formally support MX and NV
block formats.
Across Llama and Qwen models evaluated on WikiText-2, C4, and zero-shot downstream tasks, \method{} matches or exceeds the accuracy of calibration-based SmoothQuant across most
models and formats, and substantially outperforms calibration-free baselines, with particularly large gains for MX and NV block formats.
We further validate these gains with a real packed W4A4 inference path for Llama-3.1-8B on an RTX 4090. The resulting implementation closely matches the accuracy of fake quantization while delivering up to $2.61\times$ memory reduction and up to $2.46\times$ throughput speedup over BF16.
Our anonymized codebase is available \href{https://github.com/MahsaMoonlike/dualquant}{here}.

\end{abstract}

\section{Introduction}
\label{sec:intro}
Large Language Models (LLMs) have grown rapidly in scale, from dense
models~\citep{dubey2024llama3, yang2025qwen3} to
mixture-of-experts architectures exceeding a trillion
parameters~\citep{deepseekai2025deepseekv3, jiang2024mixtral},
placing enormous pressure on memory and compute at inference
time. Post-Training Quantization (PTQ) addresses this by
compressing weights, activations, and KV-cache to low-bit
representations without retraining. Modern accelerators such
as NVIDIA Blackwell GPUs provide native 4-bit matrix
multiplication~\citep{rouhani2023microscaling, nvidia2025nvfp4}
delivering up to $4\times$ the throughput of FP16, but
realizing this gain requires \emph{both} weights and
activations in a low-bit format. The central obstacle is that
activation tensors exhibit systematic large-magnitude outliers
in specific channels~\citep{dettmers2022llmint8,
bondarenko2021understanding, sun2024massive} that dominate the
quantization grid. 
Calibration-based methods address activation quantization effectively but require representative data, introducing distribution sensitivity and data-access overhead prohibitive in privacy-sensitive or on-device settings. This includes both scale-based approaches such as SmoothQuant~\citep{xiao2023smoothquant} and rotation-based approaches such as SpinQuant~\citep{liu2025spinquant}, which optimizes rotation matrices using calibration data. Calibration-free rotation-based methods such as QuaRot~\citep{ashkboos2024quarot} avoid this dependency but do not target activation-quantization error. In other words, no prior method achieves competitive calibration-free joint Weight and Activation (W+A) quantization across modern MX and NV formats.

\mahsa{In this work, we introduce \method{}, a calibration-free
dual-scale (per-row and per-column) PTQ framework that jointly
quantizes weights and activations \emph{without any access to
calibration data or activation statistics}.
The key insight is that channels with large-magnitude weight
columns contribute disproportionately to the layer output and,
as shown empirically, concentrate activation
outliers~\citep{sun2024massive,dettmers2022llmint8,bondarenko2021understanding}.
Consequently, the column scales that minimize weight-reconstruction
MSE attenuate these same channels, so they not only quantize the
weights well but also implicitly define activation rescalings
that we observe to substantially improve activation
quantization---playing a role similar to SmoothQuant's
calibrated scales~\citep{xiao2023smoothquant}, yet derived from
weights alone.
\method{} obtains these row and column scales via Block Coordinate
Descent (BCD) with closed-form updates and provable convergence
guarantees~\citep{tseng2001convergence, razaviyayn2013unified},
treating the quantizer as a black-box operator and projecting
scales to their admissible format set after each update, 
yielding a single unified algorithm that operates across INT4, MXINT4, MXFP4, and NVFP4. Evaluation on Qwen and Llama families confirms that \method{}
matches or surpasses SmoothQuant on W+A quantization across most
formats and models, without any calibration data.} Our main contributions are:
\begin{enumerate}[nosep, wide=0pt, leftmargin=*]
\item We formalize dual-scale weight quantization as a
constrained non-convex optimization problem and solve it via
BCD with closed-form per-coordinate updates, format-aware
scale projection, and provable convergence guarantees,
requiring no inner loop, no Hessian, and no calibration
data. The same algorithm operates uniformly across INT4,
MXINT4, MXFP4, and NVFP4, making \method{} the first
dual-scaling method that formally supports the full MX and NV
formats.

\item \mahsa{We show that the column scales obtained from weight MSE minimization also serve as effective activation rescalings, enabling joint W+A quantization without any access to calibration data. Across Qwen and Llama families and four 4-bit formats, \method{} closely matches or surpasses
calibration-based SmoothQuant and significantly outperforms calibration-free baselines QuaRot and SINQ+, with especially large margins on MX and NV block data formats.}

\item We validate \method{} through a real packed W4A4 deployment of Llama-3.1-8B on an RTX 4090. The packed implementation matches fake-quantization accuracy to within $0.01$ perplexity on WikiText-2 while reducing memory usage by $2.51$--$2.61\times$ and improving end-to-end speed by $1.39$--$2.46\times$ over BF16. Incorporating the scales into the activation-quantization kernel avoids additional kernel launches and limits the quantization overhead to $4.3$-$6.1\%$ of total GPU time.

\end{enumerate}

\section{Background and related work}
\label{sec:related}

\subsection{Post-training weight quantization}
A rich body of PTQ methods has been proposed to close the gap
between quantized and full-precision quality:
GPTQ~\citep{frantar2022gptq} compensates for quantization error
via second-order weight correction;
AWQ~\citep{lin2024awq} protects salient channels via
activation-weighted scaling. Calibration-free approaches HQQ~\citep{badri2023hqq} and
QuantEase~\citep{behdin2023quantease} minimize per-layer
reconstruction objectives without requiring calibration data while AdpQ~\citep{adpq2024} minimizes weight
distribution divergence via adaptive LASSO. 
SINQ~\citep{muller2025sinq} applies dual row and column scaling
via Sinkhorn-Knopp iteration~\citep{sinkhorn1967},
minimizing matrix imbalance as a proxy for weight quantization
error.
Despite their variety, these methods quantize weights
only, leaving activations in BF16 and forgoing the full
throughput of low-precision accelerators.

\subsection{Activation quantization}
Activations are significantly harder to quantize than
weights~\citep{dettmers2022llmint8, bondarenko2021understanding}:
they exhibit systematic large-magnitude outliers in specific
channels~\citep{xiao2023smoothquant, wei2022outlier} that
dominate the quantization grid and leave normal channels with
very few effective bits.
LLM.int8()~\citep{dettmers2022llmint8} handles outlier channels
via mixed-precision decomposition.
SmoothQuant~\citep{xiao2023smoothquant} migrates the quantization burden from activations to weights via calibration-based per-channel
rescaling. ZeroQuant~\citep{yao2022zeroquant} applies dynamic per-token activation quantization alongside group-wise weight quantization while OmniQuant~\citep{shao2024omniquant} and
AffineQuant~\citep{ma2024affinequant} jointly optimize scaling
and clipping via learnable transformations.
\mbox{Rotation-based}
methods~\citep{tseng2024quip, ashkboos2024quarot,
sun2025flatquant, czako2025smoothrot, liu2025spinquant, saxena2025resq, turboquant2026} suppress outliers by
spreading activation energy incoherently across dimensions via
Hadamard, orthogonal rotations, or learned affine transforms. 
While effective, most of these approaches depend on calibration
data, motivating the search for calibration-free alternatives
that can achieve competitive W+A quantization.
We propose \method{}, a calibration-free joint W+A quantization method that leverages a principled
MSE optimization of weight quantization via block
coordinate descent with closed-form updates \mahsa{and provable convergence guarantees (Appendix~\ref{app:proof_theory1}),}
extending naturally to MX and NV
format families. Most significantly, we show that the column scales derived from minimizing weight-reconstruction error also serve as effective activation rescalings, enabling joint activation and
weight quantization that matches and surpasses SmoothQuant across
most formats and models, without any access to activations or
calibration data.
As illustrated in
Figure~\ref{fig:motivation}, directly targeting the MSE yields
substantially lower reconstruction error than methods such as
SINQ~\citep{muller2025sinq} that optimize a proxy objective,
and we show in Section~\ref{sec:experiments} that this
improvement translates directly to better downstream
performance.

\begin{figure}
    \centering
    \includegraphics[width=0.65\linewidth]{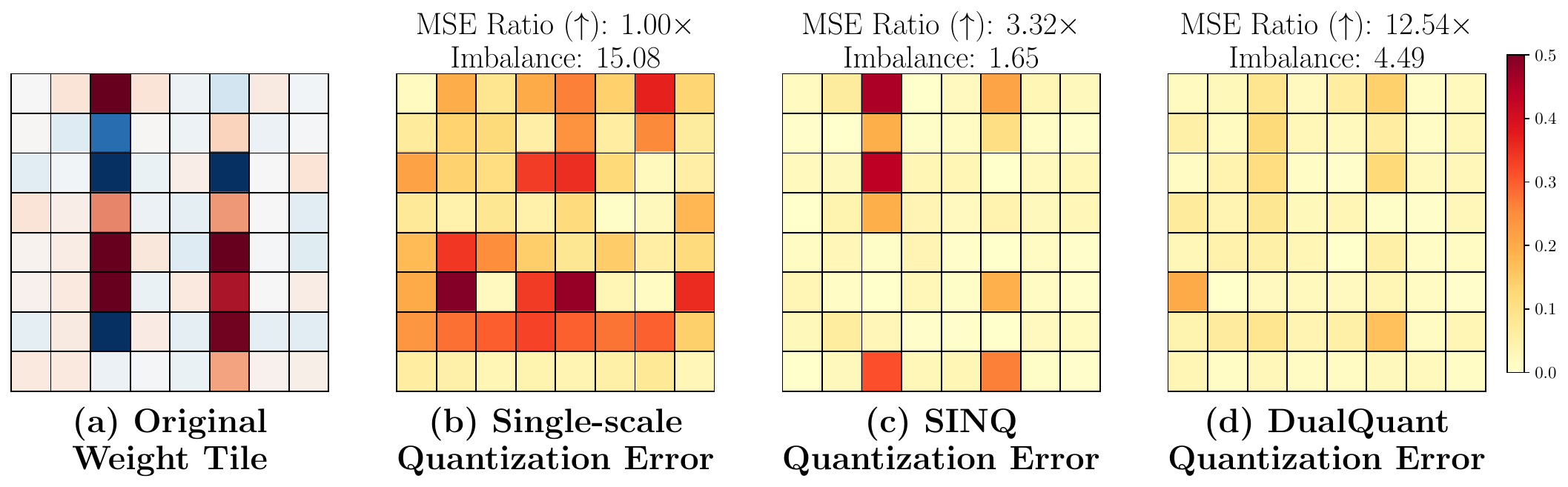}
    \caption{\textbf{Per-element quantization error $\scriptstyle |\mathbf{W} -
\hat{\mathbf{W}}|$ under different scale designs.} 
    Each panel reports the
MSE ratio ($\uparrow$) relative to single-scale and the
matrix imbalance (SINQ's objective). \mbox{\textbf{(a)} Original}
weight tile. \textbf{(b)} Single-scale: one outlier inflates
the entire row. \textbf{(c)} SINQ minimizes matrix imbalance
($1.65$) but achieves only $3.32\times$ MSE reduction.
\textbf{(d)} \method{} minimizes MSE, achieving
$12.54\times$ reduction, illustrating that low imbalance does not imply low
reconstruction error.
}
    \label{fig:motivation}
\end{figure}

\section{Quantization preliminaries}
\label{sec:preliminaries}
Quantization maps a tensor to a finite, format-specific set of representable values. Given a \mbox{tensor} $W$, let $\mathcal{G}$ denote the canonical, unit-scale grid of the target format, namely, the integers $\{-2^{B-1}, \dots, 2^{B-1}-1\}$ for INT$B$, or the
non-uniform exponent-mantissa grid for a floating-point format.
A quantizer $Q(\cdot)$ with scale $\Delta$ and zero-point $z$ maps
each element to the nearest representable value, $Q(W) = \Delta \cdot ( \mathrm{proj}_{\mathcal{G}}(W/\Delta + z) - z)$,
where $\mathrm{proj}_{\mathcal{G}}(\cdot)$ rounds and clips its argument to the nearest point on
$\mathcal{G}$, reducing to standard round-and-clip for integer formats and to nearest-level
projection on the non-uniform exponent-mantissa grid for floating-point formats.
Symmetric
quantization sets $z=0$ with $\Delta = \max(|W|)/q_{\max}$, where $q_{\max}$ denotes the maximum positive quantization level; asymmetric quantization uses nonzero
$z$ to span $[\min(W), \max(W)]$.
To reduce sensitivity to outliers, block-wise quantization partitions each row into groups of $G$ contiguous elements with independent scales. The OCP microscaling 
formats~\citep{rouhani2023microscaling} (MXFP4, MXINT4) use $G=32$ with a shared
E8M0 scale restricted to powers of two; NVFP4~\citep{nvidia2025nvfp4} uses $G=16$
with an FP8 UE4M3  scale. This motivates the format-agnostic design of \method{}, treating $Q(\cdot)$ as a black-box with an explicit scale constraint (Section~\ref{sec:dual_scale}).

\section{Methodology}
\label{sec:methodology}
\subsection{Problem description}
\label{sec:joint_objective}

For a linear layer computing $\mathbf{X}\mathbf{W}^\top$ with activation matrix $\mathbf{X} \in \mathbb{R}^{b \times n}$ and
weight matrix $\mathbf{W} \in \mathbb{R}^{m \times n}$, the joint W+A quantization error can be written as
\begin{equation}
\label{eq:joint_error_decomp}
\textstyle
    \mathbf{X}\mathbf{W}^\top - Q(\mathbf{X})\,Q(\mathbf{W})^\top
    =
    (\mathbf{X} - Q(\mathbf{X}))\mathbf{W}^\top
    +
    Q(\mathbf{X})(\mathbf{W} -
    Q(\mathbf{W}))^\top,
\end{equation}
where the activation error is amplified by $\mathbf{W}^\top$ and the weight error is amplified by $Q(\mathbf{X})$. Minimizing the two terms independently does not necessarily minimize the joint error, and minimizing~\eqref{eq:joint_error_decomp} directly is intractable without calibration data on $\mathbf{X}$. SmoothQuant~\citep{xiao2023smoothquant} targets this joint problem by collecting per-channel activation statistics from a calibration dataset to construct a per-channel rescaling applied prior to quantization, $\mathbf{X}\mathbf{W}^\top = (\mathbf{X}\,\mathrm{Diag}(\mathbf{s})^{-1}) (\mathbf{W}\, \mathrm{Diag}(\mathbf{s}))^\top$. We take a different approach: rather than collecting activation statistics from data, we show that the column scales obtained from minimizing the \emph{weight} reconstruction MSE on static weights implicitly perform a per-channel rescaling of the same form, without any access to $\mathbf{X}$, as developed next.

\subsection{Dual-scale parameterization and the activation
factorization}
\label{sec:dual_scale}

Standard block quantization assigns a single shared scale per block along each row of the weight matrix, so an outlier at position $(i,j)$ inflates the scale shared by every element in that block, while the column dimension remains entirely unaddressed. 
Figure~\ref{fig:motivation} illustrates this: the single-scale error matrix exhibits pronounced column-wise outlier patterns that persist across all rows. We mitigate this by introducing a second, column-wise scale alongside the per-block row scale. The resulting dual scaling distributes the error across both dimensions, substantially reducing the peak element-wise reconstruction error (Figure~\ref{fig:motivation}(d)). Specifically, we quantize $\mathbf{W} \in \mathbb{R}^{m \times n}$ (or a block thereof) as
\begin{equation}
\label{eq:dual_scale}
    \mathbf{W} \approx
    \mathrm{Diag}(\boldsymbol{\alpha})^{-1}\,
    Q\!\left(\mathrm{Diag}(\boldsymbol{\alpha})\,\mathbf{W}\,
    \mathrm{Diag}(\boldsymbol{\beta})\right)\,
    \mathrm{Diag}(\boldsymbol{\beta})^{-1}
    \;\triangleq\;
    \mathrm{Diag}(\boldsymbol{\alpha})^{-1}\,
    \hat{\mathbf{Q}}_{\boldsymbol{\alpha},\boldsymbol{\beta}}\,
    \mathrm{Diag}(\boldsymbol{\beta})^{-1},
\end{equation}
where $\boldsymbol{\alpha}$ and $\boldsymbol{\beta}$ are respectively row and column scales, $Q(\cdot)$ is a fixed format-specific quantizer, and $\hat{\mathbf{Q}}_{\boldsymbol{\alpha},\boldsymbol{\beta}} = Q\!\left(\mathrm{Diag}(\boldsymbol{\alpha})\,\mathbf{W}\,\mathrm{Diag}(\boldsymbol{\beta})\right)$ denotes the quantized representation stored at low precision. The inverse scales are constrained to their format-admissible sets: $\boldsymbol{\alpha}^{\odot -1} \in \mathbb{F}_\alpha^m$ and $\boldsymbol{\beta}^{\odot -1} \in \mathbb{F}_\beta^n$, where $(\cdot)^{\odot -1}$ denotes the element-wise inverse, and $\mathbb{F}_\alpha^m$, $\mathbb{F}_\beta^n$ are the admissible scale sets of the target format (BF16 for INT4; E8M0 for MXINT4/MXFP4; FP8 UE4M3 for NVFP4). The row scale absorbs row-level dynamic range as in standard formats, while the column scale captures the systematic column-wise structure that single-scale formats ignore. Hence, an outlier at position $(i,j)$ is now handled jointly by $\alpha_i$ and $\beta_j$, distributing the quantization
burden across both dimensions.

\paragraph{Column scales as activation rescalings.}

Substituting~\eqref{eq:dual_scale} into $\mathbf{X}\mathbf{W}^\top$ yields
\begin{equation}
\label{eq:activation_factorization}
\textstyle
    \mathbf{X}\mathbf{W}^\top
    \approx
    \left(\mathbf{X}\,
    \mathrm{Diag}(\boldsymbol{\beta})^{-1}\right)
    \hat{\mathbf{Q}}_{\boldsymbol{\alpha},\boldsymbol{\beta}}^\top\,
    \mathrm{Diag}(\boldsymbol{\alpha})^{-1}
    \;\triangleq\;
    \widetilde{\mathbf{X}}\,
    \widetilde{\mathbf{W}}^\top,
\end{equation}
where $\widetilde{\mathbf{X}} = \mathbf{X}\,\mathrm{Diag}(\boldsymbol{\beta})^{-1} \in \mathbb{R}^{b \times n}$ and
$\widetilde{\mathbf{W}}^\top =\hat{\mathbf{Q}}_{\boldsymbol{\alpha},\boldsymbol{\beta}}^\top\,\mathrm{Diag}(\boldsymbol{\alpha})^{-1} \in \mathbb{R}^{n \times m}$.
The quantity quantized at inference is not $X$ but the rescaled
activation $\widetilde{X}$, whose per-channel statistics are reshaped by $\boldsymbol{\beta}$. The column
scales therefore play two simultaneous roles: reduce weight quantization error
through~\eqref{eq:dual_scale} and rescale activations through~\eqref{eq:activation_factorization}.
This resembles the per-channel rescaling of SmoothQuant,
but with
reversed information flow: SmoothQuant derives its scales from activation statistics on a
calibration dataset, whereas $\boldsymbol{\beta}$ is obtained from the weight matrix alone.

\mahsa{This weight-derived rescaling benefits activation quantization; it is not merely an algebraic by-product. Intuitively, the output of a linear layer at position $(b,k)$ is $\sum_j X_{bj} W_{kj}$, so channels with large $\|W_{:,j}\|_2$ contribute disproportionately to the output, and these are empirically the channels where activation outliers concentrate~\citep{sun2024massive,dettmers2022llmint8,bondarenko2021understanding}. Since $\boldsymbol{\beta}$ is optimized to reduce weight quantization error, it attenuates these same high-magnitude channels using the weights alone. Its mechanism differs from calibration: rather than reproducing the per-channel statistics that methods such as SmoothQuant estimate from activation data, $\boldsymbol{\beta}$ has a direct, causal effect on activation quantization. 
Note that the correlation between $\beta$ and such statistics is heterogeneous and projection-dependent\green{, as shown in} Table~\ref{tab:beta_sq_corr} in Section~\ref{sec:ablation}. With the quantized weights held fixed, applying $\boldsymbol{\beta}$ to the activations measurably reduces their quantization error, without any calibration data (Appendix~\ref{app:beta_causal} isolates this by measuring per-projection activation quantization error with and without $\boldsymbol{\beta}$ under fixed quantized weights). Because this rescaling is applied inside the activation-quantization kernel, its runtime cost is negligible (Section~\ref{sec:overhead}).}

\begin{algorithm}
\footnotesize
\caption{\method{}: Dual-Scale Quantization via Coordinate
Descent}
\label{alg:dual_scale}
\begin{algorithmic}[1]
\Require Weight tile $\mathbf{W}$,
quantizer $Q(\cdot)$, admissible sets $\mathbb{F}_\alpha^m$,
$\mathbb{F}_\beta^n$, number of iterations $T$,
initialization $(\boldsymbol{\alpha}_0, \boldsymbol{\beta}_0)$
\State $\boldsymbol{\alpha} \leftarrow \boldsymbol{\alpha}_0$,
\quad $\boldsymbol{\beta} \leftarrow \boldsymbol{\beta}_0$
\For{$t = 1, \ldots, T$}
    \State $\hat{\mathbf{Q}} \leftarrow
    Q\!\left(\mathrm{Diag}(\boldsymbol{\alpha})\,\mathbf{W}\,
    \mathrm{Diag}(\boldsymbol{\beta})\right)$
    \State $\mathbf{R} \leftarrow
    \hat{\mathbf{Q}}\,\mathrm{Diag}(\boldsymbol{\beta})^{-1}$
    \State $\boldsymbol{\alpha}_{\mathrm{out}}^{\odot -1} \leftarrow
    \Pi_{\mathbb{F}_\alpha^m}\!\left(
    (\mathbf{W} \odot \mathbf{R})\,\mathbf{1}_n\;\oslash\; \|\mathbf{R}\|_{2,\mathrm{row}}^2 \right)$ and $\alpha \leftarrow 1 \oslash \alpha_{\text{out}}^{\odot -1}$ 
    \Comment{Row scale update + project} 
    \State $\hat{\mathbf{Q}} \leftarrow
    Q\!\left(\mathrm{Diag}(\boldsymbol{\alpha})\,\mathbf{W}\,
    \mathrm{Diag}(\boldsymbol{\beta})\right)$
    \State $\mathbf{S} \leftarrow
    \mathrm{Diag}(\boldsymbol{\alpha})^{-1}\hat{\mathbf{Q}}$
    \State $\boldsymbol{\beta}_{\mathrm{out}}^{\odot -1} \leftarrow
\Pi_{\mathbb{F}_\beta^n}\!\left(
(\mathbf{W} \odot \mathbf{S})^\top\mathbf{1}_m \;\oslash\;  \|\mathbf{S}\|_{2,\mathrm{col}}^2 \right)$ and $\beta \leftarrow 1 \oslash \beta_{\text{out}}^{\odot -1}$
\Comment{Col scale update + project}
\EndFor
\State \Return $\boldsymbol{\alpha}_{\mathrm{out}}^{\odot -1} $,\; $\boldsymbol{\beta}_{\mathrm{out}}^{\odot -1}$,\;
$Q\!\left(\mathrm{Diag}(\boldsymbol{\alpha})\,\mathbf{W}\,
\mathrm{Diag}(\boldsymbol{\beta})\right)$
\end{algorithmic}
\vspace{1pt}
\hrule
\vspace{1pt}
{\footnotesize \textbf{Notation:}
$\|\cdot\|_{2,\mathrm{row}}^2$ and
$\|\cdot\|_{2,\mathrm{col}}^2$: squared $\ell_2$-norm
along each row and column;
$\oslash$: element-wise division;
$\odot$: element-wise multiplication;
$(\cdot)^{\odot -1}$: element-wise inverse;
$\Pi_{\mathbb{F}}$: projection to nearest value in $\mathbb{F}$.}
\end{algorithm}

\subsection{MSE optimization via block coordinate descent}
\label{sec:bcd}
We optimize the row and column scales by minimizing the weight reconstruction MSE:
\begin{equation}
\label{eq:primal}
    \textstyle
    \min_{\boldsymbol{\alpha},\, \boldsymbol{\beta}}\;\;
    \left\|\mathbf{W} -
    \mathrm{Diag}(\boldsymbol{\alpha})^{-1}\,
    \hat{\mathbf{Q}}_{\boldsymbol{\alpha},\boldsymbol{\beta}}\,
    \mathrm{Diag}(\boldsymbol{\beta})^{-1}
    \right\|_F^2
    \quad \text{s.t.} \quad
    \boldsymbol{\alpha}^{\odot -1} \in \mathbb{F}_\alpha^m,\;\;
    \boldsymbol{\beta}^{\odot -1} \in \mathbb{F}_\beta^n.
\end{equation}
Note that the scales determine which grid points the quantizer assigns when forming $\hat{\mathbf{Q}}_{\boldsymbol{\alpha},\boldsymbol{\beta}}$ and simultaneously rescale the quantized output for reconstruction. Since grid-point selection is piecewise-constant, this joint dependence is \emph{non-differentiable}.

\paragraph{Variable splitting.}
To tackle~\eqref{eq:primal}, we duplicate each scale into inner copies controlling grid assignment and outer copies controlling reconstruction~\citep{boyd2011admm}, yielding
\begin{equation}
\label{eq:split}
\begin{aligned}    \min_{\substack{\boldsymbol{\alpha}_{\mathrm{in}},\,\boldsymbol{\beta}_{\mathrm{in}},\boldsymbol{\alpha}_{\mathrm{out}},\,\boldsymbol{\beta}_{\mathrm{out}}}}\;\;&
    \left\|\mathbf{W} -
    \mathrm{Diag}(\boldsymbol{\alpha}_{\mathrm{out}})^{-1}\,
    \hat{\mathbf{Q}}_{\boldsymbol{\alpha}_{\mathrm{in}},\boldsymbol{\beta}_{\mathrm{in}}}\,
    \mathrm{Diag}(\boldsymbol{\beta}_{\mathrm{out}})^{-1}
    \right\|_F^2 \\
    \text{s.t.} \quad \;\quad \quad &
    \boldsymbol{\alpha}_{\mathrm{in}} =
    \boldsymbol{\alpha}_{\mathrm{out}},\;\;
    \boldsymbol{\beta}_{\mathrm{in}} =
    \boldsymbol{\beta}_{\mathrm{out}},\;\;
    \boldsymbol{\alpha}_{\mathrm{out}}^{\odot -1} \in
    \mathbb{F}_\alpha^m,\;\;
    \boldsymbol{\beta}_{\mathrm{out}}^{\odot -1} \in
    \mathbb{F}_\beta^n.
\end{aligned}
\end{equation}
With the inner copies fixed, $\hat{\mathbf{Q}}_{\boldsymbol{\alpha}_{\mathrm{in}},\boldsymbol{\beta}_{\mathrm{in}}}$ is a constant matrix and the objective becomes bilinear in $\boldsymbol{\alpha}_{\mathrm{out}}^{\odot -1}$ and $\boldsymbol{\beta}_{\mathrm{out}}^{\odot -1}$, decoupling into $m$ independent scalar subproblems over $\boldsymbol{\alpha}$ and $n$ over $\boldsymbol{\beta}$. A block coordinate descent algorithm is then used~\citep{tseng2001convergence, razaviyayn2013unified} to alternate between updating the outer scales (in closed form) and re-synchronizing the inner copies (Appendix~\ref{app:bcd}).

\paragraph{Closed-form scale updates.}
Hereafter, we write $\hat{\mathbf{Q}}$ for $\hat{\mathbf{Q}}_{\boldsymbol{\alpha}_{\mathrm{in}},\boldsymbol{\beta}_{\mathrm{in}}}$ for brevity. Defining $\mathbf{R}_{i,:} = \hat{\mathbf{Q}}_{i,:}\, \mathrm{Diag}(\boldsymbol{\beta}_{\mathrm{out}})^{-1}$ and $\mathbf{S}_{:,j} =
\mathrm{Diag}(\boldsymbol{\alpha}_{\mathrm{out}})^{-1} \hat{\mathbf{Q}}_{:,j}$, the optimal outer scales are (Appendix~\ref{app:closed_form}): $\alpha_i^* =
    \tfrac{\|\mathbf{R}_{i,:}\|_2^2}
    {\mathbf{W}_{i,:}\,\mathbf{R}_{i,:}^\top}$, and $
    \beta_j^* =
    \tfrac{\|\mathbf{S}_{:,j}\|_2^2}
    {\mathbf{S}_{:,j}^\top\,\mathbf{W}_{:,j}}$.
Both updates have the same structure: the optimal scale is the ratio of the energy of the rescaled quantized slice to its inner product with the original weight slice. After each update, the scale is projected onto the admissible set by rounding $\alpha_i^{-1}$ (resp.\ $\beta_j^{-1}$) to the nearest value in $\mathbb{F}_\alpha^m$ (resp.\ $\mathbb{F}_\beta^n$). The BCD
scheme cycles through: (i)~update $\boldsymbol{\alpha}_{\mathrm{out}}$, (ii)~enforce consistency and re-quantize, (iii)~update $\boldsymbol{\beta}_{\mathrm{out}}$, (iv)~enforce consistency and re-quantize, as summarized in Algorithm~\ref{alg:dual_scale}.

\paragraph{Convergence analysis.}
We establish convergence under the following mild assumption.
\begin{assumption}[Non-degenerate quantization]
\label{ass:nondegenerate}
For all iterations $t$ and rows $i$, no row of
$\mathrm{Diag}(\boldsymbol{\alpha}^{(t)})\mathbf{W}
\mathrm{Diag}(\boldsymbol{\beta}^{(t)})$ is mapped entirely
to zero by $Q(\cdot)$. This holds generically since LLM weight
rows follow near-Gaussian distributions with non-negligible
$\ell_2$ norm~\citep{dettmers2023qlora}. In the
degenerate case $\hat{\mathbf{Q}}_{i,:} = \mathbf{0}$, the scale
$\alpha_i$ does not affect the objective and is set to $1$ by convention.

\end{assumption}

\begin{theorem}[Convergence of \method{}; proof in Appendix~\ref{app:proof_theory1}]
\label{thm:convergence}
Let Assumption~\ref{ass:nondegenerate} hold, and let $\{(\boldsymbol{\alpha}^{(t)},
\boldsymbol{\beta}^{(t)})\}_{t \geq 1}$ be the sequence generated
by Algorithm~\ref{alg:dual_scale}. Suppose $\hat{\mathbf{Q}}^{(t)}$
stabilizes at some finite iteration $T$, i.e.,
$\hat{\mathbf{Q}}^{(t)} = \hat{\mathbf{Q}}^{(T)}$ for all
$t \geq T$. Then:

\smallskip
\noindent\textup{(i)}~\textbf{Descent guarantee.}
$\mathrm{MSE}_{\mathbf{W}}(\boldsymbol{\alpha}^{(t+1)},
\boldsymbol{\beta}^{(t+1)}) \leq
\mathrm{MSE}_{\mathbf{W}}(\boldsymbol{\alpha}^{(t)},
\boldsymbol{\beta}^{(t)})$ for all $t \geq T$.

\smallskip
\noindent\textup{(ii)}~\textbf{Stationarity.}
Every limit point $(\boldsymbol{\alpha}^*, \boldsymbol{\beta}^*)$
is a coordinate-wise stationary point of~\eqref{eq:primal},
satisfying
$\boldsymbol{\alpha}^* =
\arg\min_{\boldsymbol{\alpha}}\,\mathrm{MSE}_{\mathbf{W}}
(\boldsymbol{\alpha}, \boldsymbol{\beta}^*)$ and
$\boldsymbol{\beta}^* =
\arg\min_{\boldsymbol{\beta}}\,\mathrm{MSE}_{\mathbf{W}}
(\boldsymbol{\alpha}^*, \boldsymbol{\beta})$.
\end{theorem}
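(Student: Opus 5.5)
The plan is to reduce everything after iteration $T$ to exact block coordinate descent on a fixed bilinear least-squares problem. Once $\hat{\mathbf{Q}}^{(t)} = \hat{\mathbf{Q}}^{(T)} \triangleq \hat{\mathbf{Q}}$ for all $t \ge T$, the split objective~\eqref{eq:split} with the inner copies frozen at any $t\ge T$ becomes
\[
F(\mathbf{a},\mathbf{b}) \;=\; \bigl\|\mathbf{W} - \mathrm{Diag}(\mathbf{a})\,\hat{\mathbf{Q}}\,\mathrm{Diag}(\mathbf{b})\bigr\|_F^2,
\qquad \mathbf{a}=\boldsymbol{\alpha}_{\mathrm{out}}^{\odot -1},\; \mathbf{b}=\boldsymbol{\beta}_{\mathrm{out}}^{\odot -1}.
\]
Here $\hat{\mathbf{Q}}$ no longer depends on the scales. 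Moreover, because the inner and outer copies are re-synchronized in Algorithm~\ref{alg:dual_scale}, $F(\mathbf{a}^{(t)},\mathbf{b}^{(t)})$ equals $\mathrm{MSE}_{\mathbf{W}}(\boldsymbol{\alpha}^{(t)},\boldsymbol{\beta}^{(t)})$ along the iterates. So it suffices to study the alternating minimization of $F$ over $\mathbf{a}\in\mathbb{F}_\alpha^m$ and $\mathbf{b}\in\mathbb{F}_\beta^n$.

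\textbf{Step 1 (exact block minimization).} For fixed $\mathbf{b}$, $F$ separates into $m$ scalar quadratics
\[
a_i \;\mapsto\; \|\mathbf{W}_{i,:} - a_i \mathbf{R}_{i,:}\|_2^2 .
\]
Under Assumption~\ref{ass:nondegenerate} we have $\mathbf{R}_{i,:}\neq 0$, so each quadratic is strictly convex with unconstrained minimizer $\mathbf{W}_{i,:}\mathbf{R}_{i,:}^\top/\|\mathbf{R}_{i,:}\|_2^2$, as derived in Appendix~\ref{app:closed_form}. Because a one-dimensional quadratic is symmetric about its vertex, the value in a discrete set $\mathbb{F}$ nearest the vertex is the constrained minimizer. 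Hence the projection $\Pi_{\mathbb{F}_\alpha}$ returns an exact constrained block minimizer; in the BF16 case this holds up to the grid. The same argument applies verbatim to $\mathbf{b}$ with $\mathbf{S}_{:,j}$. A column of $\hat{\mathbf{Q}}$ could be zero; I would handle that by the convention already stated in Assumption~\ref{ass:nondegenerate}.

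\textbf{Step 2 (descent, part (i)).} Since each half-step exactly minimizes $F$ in one block, the chain
\[
F(\mathbf{a}^{(t+1)},\mathbf{b}^{(t+1)}) \le F(\mathbf{a}^{(t+1)},\mathbf{b}^{(t)}) \le F(\mathbf{a}^{(t)},\mathbf{b}^{(t)})
\]
holds. Identifying $F$ with $\mathrm{MSE}_{\mathbf{W}}$ on iterates for $t\ge T$ then gives (i).

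\textbf{Step 3 (stationarity, part (ii)).} $F\ge 0$ and is monotone, so $F^{(t)}$ converges. Let $(\mathbf{a}^*,\mathbf{b}^*)$ be a limit point along a subsequence $t_k$. The admissible sets are discrete and closed, and in the E8M0/UE4M3 cases finite in any bounded region, so the subsequence is eventually constant or at least converges within the closed set. Pass to a further subsequence on which $\mathbf{a}^{(t_k+1)}$ also converges. Continuity of $F$ and the sandwich $F^{(t_k+1)}\le F(\mathbf{a}^{(t_k+1)},\mathbf{b}^{(t_k)})\le F^{(t_k)}$ force $F(\mathbf{a}',\mathbf{b}^*)=F(\mathbf{a}^*,\mathbf{b}^*)$ at the block-minimizer $\mathbf{a}'$. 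Uniqueness of each scalar minimizer, from strict convexity and nearest-point projection (ties broken consistently), then gives $\mathbf{a}^*=\arg\min_{\mathbf{a}} F(\mathbf{a},\mathbf{b}^*)$. Symmetrically, $\mathbf{b}^*=\arg\min_{\mathbf{b}}F(\mathbf{a}^*,\mathbf{b})$. This is the standard two-block result of \citet{tseng2001convergence} and \citet{grippo2000convergence}-type arguments; I would cite \citet{tseng2001convergence} and only check the uniqueness hypothesis.

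\textbf{Main obstacle.} The delicate point is translating coordinate-wise stationarity of the frozen problem $F$ back to the original~\eqref{eq:primal}, where $\hat{\mathbf{Q}}$ depends on the scales. I would argue only what is claimed: at the limit, inner and outer copies coincide and $\hat{\mathbf{Q}}^*=\hat{\mathbf{Q}}_{\boldsymbol{\alpha}^*,\boldsymbol{\beta}^*}$. Stationarity is then understood with the grid assignment held fixed, i.e.\ as a fixed point of the split formulation~\eqref{eq:split}. I would state explicitly that this is the sense of $\arg\min$ in (ii). A secondary concern is boundedness of the iterates. I would obtain it from $F$ being coercive in each block when rows and columns of $\hat{\mathbf{Q}}$ are nonzero, combined with the fact that the sublevel set $\{F\le F^{(T)}\}$ is bounded.
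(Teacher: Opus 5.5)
Your route is essentially the paper's. You freeze $\hat{\mathbf{Q}}$ at $\hat{\mathbf{Q}}^{(T)}$, treat each half-step as exact minimization of a separable, strictly convex quadratic in the reciprocal scales, chain the two half-steps for (i), and obtain coordinate-wise minimality of limit points from a Tseng-type BCD argument.

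Your caveat on (ii) is accurate and costs you nothing relative to the paper. Its proof also only establishes $\boldsymbol{\alpha}^*=\arg\min_{\boldsymbol{\alpha}} g(\boldsymbol{\alpha},\boldsymbol{\beta}^*)$ for the frozen objective $g$. That objective coincides with $\mathrm{MSE}_{\mathbf{W}}$ only where the grid assignment equals $\hat{\mathbf{Q}}^{(T)}$, e.g.\ along the iterates.

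The real differences are these:
\begin{itemize}
\item You fold the nearest-point projection into the block step, so your (ii) is relative to $\mathbb{F}_\alpha^m,\mathbb{F}_\beta^n$. The paper instead proves (ii) for unconstrained scales in $\mathbb{R}_+$, using sign preservation to get $u_i^*>0$, and handles projection only for descent in Remark~\ref{rem:projection}. Your constrained version is the one that matches Algorithm~\ref{alg:dual_scale}, whose projected limit points need not be unconstrained minimizers.
\item Your polynomial parametrization $F(\mathbf{a},\mathbf{b})$ removes any need for positivity.
\item For two blocks, your sandwich argument already gives $F(\mathbf{a}^*,\mathbf{b}^*)=\min_{\mathbf{a}}F(\mathbf{a},\mathbf{b}^*)$. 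The optimality of $\mathbf{b}^{(t_k)}$ for $F(\mathbf{a}^{(t_k)},\cdot)$ passes directly to the limit. So uniqueness and your tie-breaking are needed only to write $=$ instead of $\in$.
\end{itemize}

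One claim in your last paragraph is false. The sublevel set $\{F\le F^{(T)}\}$ is not bounded, because $F(c\,\mathbf{a},\mathbf{b}/c)=F(\mathbf{a},\mathbf{b})$ for every $c>0$; per-block coercivity does not give joint boundedness. The paper removes this invariance by normalizing one scale vector. You do not need the claim, though. BF16, E8M0 and UE4M3 each have finitely many finite representable values, so the iterates lie in a finite set. The sub-subsequence extraction in your Step 3 is then trivial: convergent subsequences are eventually constant.
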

\noindent The stabilization condition on $\hat{\mathbf{Q}}$ is empirically 
\mahsa{supported by} Figure~\ref{fig:qhat_stable}(a), where its relative change decays to near zero within approximately $T{\approx}15$ iterations for all layers and blocks. Furthermore, Figure~\ref{fig:qhat_stable}(b) confirms Theorem~\ref{thm:convergence}(i): the MSE decreases monotonically and plateaus within a finite number of iterations.

\begin{figure}
\centering
\includegraphics[width=0.73\linewidth]{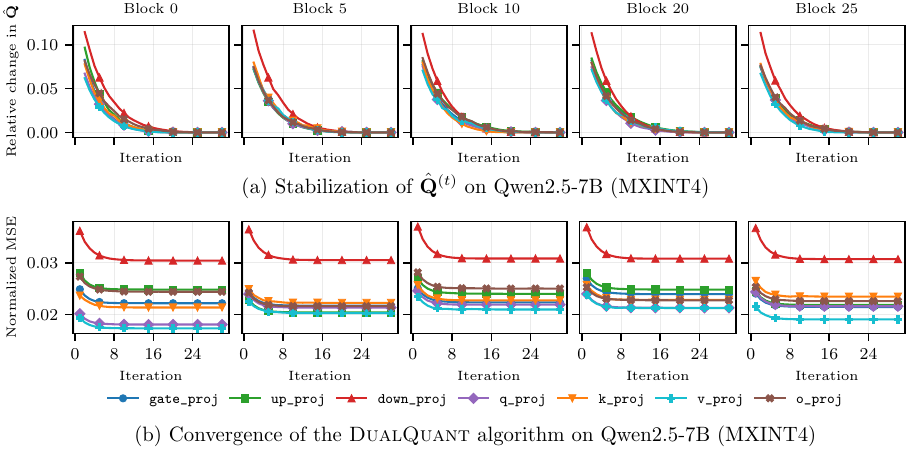}
\caption{\textbf{(a)} Relative change $\scriptstyle \|\hat{\mathbf{Q}}^{(t)} -
\hat{\mathbf{Q}}^{(t-1)}\|_F / \|\hat{\mathbf{Q}}^{(t-1)}\|_F$
vs.\ BCD iteration for five transformer blocks.
The change decays to near zero by $t{\approx}16$ for all layers and blocks,
validating the stabilization condition of Theorem~\ref{thm:convergence}.
\textbf{(b)} Normalized MSE
$\scriptstyle \|\mathbf{W} - \hat{\mathbf{W}}\|_F^2 / \|\mathbf{W}\|_F^2$
vs.\ BCD iteration for the same blocks.
All curves decrease monotonically and plateau within ${\approx}10$ iterations,
confirming Theorem~\ref{thm:convergence}(i).
}
\label{fig:qhat_stable}
\end{figure}

\subsection{Inference-time procedure and overhead}
\label{sec:inference}

The offline quantization procedure yields three artifacts per
linear layer: the quantized weight matrix $\hat{\mathbf{Q}}$,
row scales $\boldsymbol{\alpha}$, and column scales
$\boldsymbol{\beta}$. At inference, the layer computes
$Q(\widetilde{\mathbf{X}})\,\hat{\mathbf{Q}}^\top\,\mathrm{Diag}(\boldsymbol{\alpha})^{-1}$,
where $\widetilde{\mathbf{X}} = \mathbf{X}\,\mathrm{Diag}(\boldsymbol{\beta})^{-1}$,
and $Q(\widetilde{\mathbf{X}})$ denotes the format-native
row-wise quantization of the rescaled activations. The row
scale $\boldsymbol{\alpha}$ is absorbed into the per-row
dequantization scale of $\hat{\mathbf{Q}}$ at quantization
time and incurs no runtime cost.
The column scale $\boldsymbol{\beta}$ introduces a per-channel
rescaling of the activations \emph{before} quantization. It
does not replace the format-native activation scale: $Q(\cdot)$
still computes its own per-row scale on
$\widetilde{\mathbf{X}}$ as usual. The two operate at different
granularities: $\boldsymbol{\beta}$ attenuates outlier
channels globally, while the format-native scale captures the
residual per-token dynamic range.
Unlike SmoothQuant
and
AWQ,
which share a single scale across projections and fold it offline,
\method{}
optimizes $\boldsymbol{\beta}$ independently per layer, which
allows each layer to reach its own optimum.
The resulting per-channel multiply 
is applied inside the activation-quantization kernel without a separate kernel launch (Section~\ref{sec:overhead}).

\section{Experiments} 
\label{sec:experiments}

\paragraph{Models and formats.}
We evaluate two model families: Qwen (Qwen2.5-7B, Qwen3-0.6B, Qwen3-14B) and Llama (Llama-2-7B, Llama-3.1-8B, Llama-3.2-1B), covering small, medium, and large scales across two architectures. We target 4-bit quantization formats spanning the major families of modern low-bit deployment: INT4 (block size $G=64$), MXINT4 ($G=32$), MXFP4 ($G=32$), and NVFP4 ($G=16$). INT4 is the standard integer baseline~\citep{frantar2022gptq, lin2024awq}. For all formats, the obtained scales are constrained to $\mathbb{F}$ as described in Section~\ref{sec:bcd}. The corresponding licenses for all models are included in Appendix~\ref{app:licenses}.

\paragraph{Evaluation setup.}
We evaluate \method{} across a range of tasks capturing different aspects of model performance: \emph{language modeling ability} via perplexity on WikiText-2~\citep{merity2017pointer} and C4~\citep{raffel2020c4} (test splits), \emph{commonsense reasoning} and \emph{language understanding} via normalized zero-shot accuracy on MMLU~\citep{hendrycks2021mmlu}, HellaSwag~\citep{zellers2019hellaswag}, PIQA~\citep{bisk2020piqa}, ARC-Easy~\citep{clark2018think}, and \green{WinoGrande}~\citep{sakaguchi2021winogrande}.
The corresponding licenses 
are included in Appendix~\ref{app:licenses}.
All methods requiring calibration
data (AWQ, SmoothQuant) use 256 randomly sampled sequences
of length 8192 from the WikiText-2 training split; calibration and
evaluation sets are strictly disjoint. We quantize all linear
projections in every transformer block;
\texttt{lm\_head} is kept in full
precision. All fake-quantization experiments are conducted on NVIDIA GH200 80~GB GPUs, with weights dequantized to FP16/BF16 during inference. Inference efficiency is evaluated separately using the real packed W4A4 implementation on a single NVIDIA RTX 4090 with 24~GB of memory (Section~\ref{sec:overhead}). Weight-only results and 8-bit W+A
results are provided in Appendix~\ref{app:weight_only} and
Appendix~\ref{app:8bit_wa}, respectively. 

\subsection{Joint weight-and-activation quantization}
\label{sec:wa_quant}
Tables~\ref{tab:ppl_wa_llama} (Llama family, main text)
and~\ref{tab:ppl_wa_qwen} (Qwen family,
Appendix~\ref{app:additional_results}) compare \method{} against five baselines: \textbf{RTN}, which applies round-to-nearest quantization to both weights and activations; \textbf{AWQ+}, which extends AWQ~\citep{lin2024awq} with RTN activation quantization; \textbf{SmoothQuant}~\citep{xiao2023smoothquant}, which uses calibration-based per-channel activation rescaling; \textbf{SINQ+}, our extension of
SINQ~\citep{muller2025sinq}
(see Appendix~\ref{app:sinq_extension}); and \textbf{QuaRot} \citep{ashkboos2024quarot}, a calibration-free approach that applies rotation matrices to weights and activations to suppress outliers prior to quantization, followed by RTN activation quantization.
Additional details on baselines are provided in Appendix~\ref{app:baselines}. 

Table~\ref{tab:ppl_wa_llama} reports joint W+A quantization
perplexity on the Llama family across four 4-bit formats.
\method{} consistently matches or surpasses the 
calibration-based 
SmoothQuant, while requiring no calibration data. On MXFP4, \method{} improves over SmoothQuant on WikiText-2 for all three models, substantially on Llama-2-7B and Llama-3.2-1B and narrowly on Llama-3.1-8B; on C4 the same holds except for Llama-3.1-8B, where \method{} trails slightly. \green{On MXINT4, \method{} is better than SmoothQuant across both WikiText-2 and C4 for all three models}. On INT4, \method{} surpasses both SmoothQuant and
AWQ+ across all models, demonstrating that weight-derived
scales provide effective activation rescaling even on the
standard integer format. On NVFP4, \method{} achieves the
strongest results among all methods on Llama-3.1-8B and Llama-3.2-1B, with the same overall trend on C4.
In contrast, both calibration-free baselines degrade substantially on MX and NV
formats: SINQ+ degrades on MXINT4 and MXFP4 
and diverges completely on NVFP4, as its format-unaware scales fall outside the FP8 representable range; QuaRot similarly diverges on NVFP4 and degrades substantially on MX formats,
as its Hadamard rotation redistributes activation magnitude in a way
incompatible with block floating-point scaling. 
\method{} avoids both failure modes by
design, incorporating format constraints directly into the optimization and requiring no
rotation or calibration data. Corresponding results on the Qwen family
(Table~\ref{tab:ppl_wa_qwen}, Appendix~\ref{app:additional_results})
show consistent trends\green{.}%
\footnote{The \method{}--SINQ+ gap persists without our extensions: in SINQ's native weight-only INT4 setting, \method{} is more accurate on three of four models (Appendix~\ref{app:weight_only}), so it stems from the \emph{objective} itself.}

\input{tables/main_table_llama_family}
\label{sec:downstream}
Table~\ref{tab:downstream} reports zero-shot accuracy on MMLU,
HellaSwag,
PIQA,
ARC-Easy,
and \green{WinoGrande}
for Llama-3.1-8B across all four W+A quantization formats. Corresponding results for the \green{Qwen3-14B} are reported in Table~\ref{tab:downstream_qwen} in Appendix~\ref{app:additional_results}. \method{} achieves the best calibration-free accuracy across nearly all settings. On NVFP4, 
\method{} surpasses every baseline, including the calibration-based SmoothQuant, across all five tasks, while both SINQ+ and QuaRot diverge completely on
NVFP4, making \method{} \green{by far the strongest calibration-free method on this format}. On MXINT4 and MXFP4, \method{} substantially outperforms both calibration-free
baselines \green{on most tasks}: SINQ+ collapses toward random-guess level, while QuaRot also degrades significantly. On INT4, where all methods remain
viable, \method{} achieves the best calibration-free accuracy on MMLU, HellaSwag, and
ARC-Easy.
\input{tables/downstream_main}

\subsection{Runtime and memory analysis}
\label{sec:overhead}

\paragraph{Offline quantization cost.}
The optimization that produces $(\boldsymbol{\alpha},
\boldsymbol{\beta})$ operates entirely on the static weight
matrix with no calibration forward passes, no Hessian, and no
inner optimization loop. Each coordinate update is a single
closed-form expression, making the offline cost comparable to
RTN and orders of magnitude cheaper than GPTQ or AWQ. 

\paragraph{Real packed W4A4 deployment.}
To verify that fake-quantization results translate to actual deployment, we implement a packed W4A4 inference path for Llama-3.1-8B~\citep{dubey2024llama3} on a single NVIDIA RTX 4090~\citep{nvidia_rtx4090}. The implementation uses INT4 with group size $G{=}64$ and is built on the Nunchaku/SVDQuant~\citep{li2025svdquant} W4A4 GEMM backend (Appendix~\ref{app:packed-w4a4}). Accuracy, memory usage, and latency are measured directly on the packed deployment path \emph{without} fake quantization. WikiText-2 perplexity is $6.74$, closely matching the fake-quantization result of $6.73$ (Table~\ref{tab:ppl_wa_llama}) with an absolute difference of $0.01$. This agreement confirms that the fake-quantization results in Section~\ref{sec:experiments} accurately reflect the deployed model.

\paragraph{Inference and memory overhead.}
The row scale $\alpha$ is folded into the native group scale and introduces no runtime overhead. The column scale $\beta$ is incorporated into the activation-quantization kernel without an additional kernel launch, while weights remain packed throughout inference. Thus, activation quantization, including the application of $\beta$, constitutes the only per-step quantization overhead, accounting for $4.3$\texttt{-}$6.1\%$ of total W4A4 GPU time (Table~\ref{tab:packed-latency}). Despite this overhead, \green{\method{}} W4A4 achieves $1.39$\texttt{-}$2.46\times$ end-to-end speedup over BF16 across prefill and decode at batch sizes $1$ and $16$ (Table~\ref{tab:packed-w4a4}). For Llama-3.1-8B relative to BF16, the packed deployment reduces the weight footprint from $16.20$~GB to $5.92$~GB and end-to-end memory usage by $2.51$\texttt{-}$2.61\times$ (Table~\ref{tab:packed-w4a4}). Component-level timing and memory breakdowns are provided in Appendix~\ref{app:packed-w4a4} (Table~\ref{tab:packed-memory}).

\begin{table}
\centering
\scriptsize
\caption{End-to-end memory usage and throughput of the real packed W4A4 deployment versus BF16 for Llama-3.1-8B on an RTX 4090 with INT4, $G{=}64$. Prefill uses an 82-token prompt without a KV cache. Decode generates $100$ tokens with a 182-token KV cache. Memory includes weights, scales, unquantized embedding and lm\_head, activations, and the KV cache for decode.}
\label{tab:packed-w4a4}
\resizebox{0.8\textwidth}{!}{
\begin{tabular}{lcccccc}
\toprule
\multirow{2}{*}{\textbf{Workload}} & \textbf{Memory (GB)} & \hl \textbf{Memory (GB)} & \hl \textbf{Memory} &
\textbf{Throughput (tok/s)} & \hl \textbf{Throughput (tok/s)} & \hl \\
 & \textbf{BF16}  & \hl \textbf{W4A4}  & \hl \textbf{Reduction} &
\textbf{BF16}  & \hl \textbf{W4A4}  & \multirow{-2}{*}{\hl \textbf{Speedup}} \\
\midrule
Prefill $(b{=}1)$  & 16.33 & \hl6.25 & \hl $2.61\times$ & 3904.6  & \hl 6421.4  & \hl $1.64\times$ \\
Prefill $(b{=}16)$ & 16.66 & \hl6.58 & \hl $2.53\times$ & 8940.3  & \hl 22003.4 & \hl $2.46\times$ \\
Decode $(b{=}1)$   & 16.34 & \hl6.26 & \hl $2.61\times$ & 56.6    & \hl 83.7    & \hl $1.48\times$ \\
Decode $(b{=}16)$  & 16.75 & \hl6.67 & \hl $2.51\times$ & 694.5   & \hl 964.1   & \hl $1.39\times$ \\
\bottomrule
\end{tabular}}
\end{table}

\begin{table}
\centering
\renewcommand{\arraystretch}{0.95}
\begin{minipage}[t]{0.48\textwidth}
\centering
\caption{Scale ablation for W+A quantization (WikiText-2 PPL $\downarrow$).
 \textbf{Bold}: best per
column. 
$\boldsymbol{\alpha}$-only fixes
$\boldsymbol{\beta}{=}\mathbf{1}_n$;
$\boldsymbol{\beta}$-only fixes
$\boldsymbol{\alpha}{=}\mathbf{1}_m$.}
\label{tab:ablation_scales}
\vspace{2pt}
\scriptsize
\setlength{\tabcolsep}{3pt}
\begin{tabular}{@{}lcccc@{}}
\toprule
\multirow{2}{*}[-2pt]{\textbf{Variant}} &
\multicolumn{2}{c}{\textbf{Qwen2.5-7B}} &
\multicolumn{2}{c}{\textbf{Llama-3.1-8B}} \\
\cmidrule(lr){2-3}\cmidrule(lr){4-5}
 & \textbf{MXFP4} & \textbf{NVFP4}
 & \textbf{MXFP4} & \textbf{NVFP4} \\
\midrule
RTN (no scales)            & 7.78 & 7.07 & 7.60 & 9.04 \\
$\boldsymbol{\alpha}$-only & 7.78 & 7.05 & 7.60 & 8.27 \\
$\boldsymbol{\beta}$-only  & 7.58 & 6.88 & 7.46 & 7.79 \\
\hl Both ($\boldsymbol{\alpha}+\boldsymbol{\beta}$) &
\hl\textbf{7.45} & \hl\textbf{6.53} & \hl\textbf{7.35} & \hl\textbf{6.83} \\
\bottomrule
\end{tabular}%
\end{minipage}
\hfill
\begin{minipage}[t]{0.48\textwidth}
\centering
\caption{Per-layer Pearson correlation  between \method{}'s $\beta$ and
SmoothQuant's calibrated scale $s$,  on
    Llama-3.1-8B.}
\label{tab:beta_sq_corr}
\vspace{2pt}
\scriptsize
\setlength{\tabcolsep}{4pt}
\begin{tabular}{@{}lccc@{}}
\toprule
\textbf{Projection} & \textbf{$\lambda{=}0$} & \textbf{$\lambda{=}0.5$} & \textbf{$\lambda{=}1.0$} \\
\midrule
\texttt{q\_proj}    & +0.96 & +0.09 & $-0.41$ \\
\texttt{k\_proj}    & +0.97 & +0.06 & $-0.57$ \\
\texttt{v\_proj}    & +0.93 & +0.53 & +0.07  \\
\texttt{gate\_proj} & +0.91 & +0.55 & $-0.34$ \\
\texttt{up\_proj}   & +0.83 & +0.41 & $-0.28$ \\
\bottomrule
\end{tabular}%
\end{minipage}%
\end{table}

\subsection{Ablation studies}
\label{sec:ablation}
\paragraph{Row scales, column scales, or both.}
Table~\ref{tab:ablation_scales} compares three variants of
\method{} 
on MXFP4 and NVFP4 
W+A quantization. The \textbf{$\boldsymbol{\alpha}$-only} variant
improves over RTN primarily on NVFP4, whose finer FP8 scale
grid offers more degrees of freedom than MXFP4's coarser E8M0 grid (e.g.,
$9.04 \to 8.27$ on Llama-3.1-8B), while the \textbf{$\boldsymbol{\beta}$-only}
variant yields consistently larger gains by simultaneously
reducing column-wise weight quantization error and rescaling activations via the factorization of
Section~\ref{sec:dual_scale}.
Combining both yields the best results in all cases with roughly additive
improvement, confirming that row and column scales target
complementary sources of error.

\paragraph{Hyperparameter selection.}
\method{} has two hyperparameters: BCD iteration, $T$, and
initialization $(\boldsymbol{\alpha}_0, \boldsymbol{\beta}_0)$.
The monotone-descent
objective guarantees convergence
from any starting point, but a good initialization reduces the
iterations needed. We initialize $\alpha_{0,i} =
\max_j |W_{ij}|$ (format-native absmax) and
$\beta_{0,j} = \sum_i |W_{ij}|$ (per-column
$\ell_1$-norm). Considering Figure~\ref{fig:qhat_stable}, 
we use $T{=}15$, which we found sufficient across the evaluated settings.

\paragraph{Correlation between the scales of \method{} and SmoothQuant.}
To relate \method{}'s weight-derived rescaling $\beta$ to SmoothQuant's calibration-derived scale, we compute the per-layer Pearson correlation~\citep{pearson1895regression} across $\lambda \in \{0, 0.5, 1.0\}$, the migration strength in SmoothQuant's scale equation, on Llama-3.1-8B (Table~\ref{tab:beta_sq_corr}). At $\lambda=0$ (purely
weight-derived $s$), $\beta$ aligns near-perfectly with SmoothQuant's scale; correlation remains
positive across most projections at $\lambda=0.5$ and decreases monotonically toward $\lambda=1.0$
(activation-dominated $s$), as expected for a weight-only method. Correlation is descriptive and does not imply equivalent quantization behavior.


\section{Conclusion}
\label{sec:conclusion}

We present \method{}, a calibration-free PTQ framework that jointly quantizes weights and activations by minimizing weight-reconstruction MSE via BCD with closed-form updates and provable convergence, operating across INT4, the MX family, and NVFP4. Its weight-driven column scales double as activation rescalings, so on Llama and Qwen \method{} matches or surpasses calibration-based SmoothQuant and clearly outperforms calibration-free QuaRot and SINQ+, without any calibration data. A packed W4A4 path for Llama-3.1-8B further delivers up to 2.61$\times$ lower memory use and 2.46$\times$ higher throughput than BF16 on an RTX 4090.

\pagebreak

\bibliography{iclr2027_conference}
\bibliographystyle{iclr2027_conference}

\appendix
\input{appendix/optimization_bcd}

\input{appendix/closed_form_solution}
\input{appendix/theory1_proof}

\input{appendix/sinq}
\input{appendix/causal_effect_of_beta}
\input{appendix/baseline_methods}

\input{appendix/additional_reults}

\input{appendix/weight_only_results}

\input{appendix/packed_w4a4}

\section{Licenses for existing assets} 
\label{app:licenses}
All models, datasets, and software libraries used in this work are publicly available and used in accordance with their respective licenses. Table~\ref{tab:licenses} summarizes the assets along with their licenses and sources. The Llama models are released under Meta's Llama Community Licenses and the Qwen models under Apache 2.0.
The evaluation datasets are released under a mix of permissive and share-alike licenses: WikiText-2 (CC BY-SA 3.0), C4 (ODC-BY 1.0), HellaSwag (MIT), MMLU (MIT), PIQA (AFL-3.0), ARC-Easy (CC BY-SA 4.0), and WinoGrande (CC-BY).
All assets were used solely for academic research purposes consistent with their terms of use.
\input{tables/licenses_table}

\end{document}

%% file: math_commands.tex
\usepackage{amsmath,amsfonts,bm}

\def\eqref#1{equation~\ref{#1}}

\def\1{\bm{1}}

\def\ra{{\textnormal{a}}}

\def\rx{{\textnormal{x}}}

\def\rva{{\mathbf{a}}}

\def\erva{{\textnormal{a}}}

\def\ervx{{\textnormal{x}}}

\def\rmA{{\mathbf{A}}}

\def\vmu{{\bm{\mu}}}
\def\vtheta{{\bm{\theta}}}
\def\va{{\bm{a}}}

\def\ve{{\bm{e}}}

\def\vx{{\bm{x}}}

\def\eva{{a}}

\def\mA{{\bm{A}}}

\def\mH{{\bm{H}}}
\def\mI{{\bm{I}}}
\def\mJ{{\bm{J}}}

\def\mX{{\bm{X}}}

\def\mSigma{{\bm{\Sigma}}}

\DeclareMathAlphabet{\mathsfit}{\encodingdefault}{\sfdefault}{m}{sl}
\SetMathAlphabet{\mathsfit}{bold}{\encodingdefault}{\sfdefault}{bx}{n}
\newcommand{\tens}[1]{\bm{\mathsfit{#1}}}
\def\tA{{\tens{A}}}

\def\tX{{\tens{X}}}

\def\gG{{\mathcal{G}}}

\def\sA{{\mathbb{A}}}
\def\sB{{\mathbb{B}}}

\def\sS{{\mathbb{S}}}

\def\emA{{A}}

\newcommand{\etens}[1]{\mathsfit{#1}}

\def\etA{{\etens{A}}}

\newcommand{\E}{\mathbb{E}}

\newcommand{\R}{\mathbb{R}}

\newcommand{\KL}{D_{\mathrm{KL}}}
\newcommand{\Var}{\mathrm{Var}}

\newcommand{\Cov}{\mathrm{Cov}}

\newcommand{\normltwo}{L^2}
\newcommand{\normlp}{L^p}

\newcommand{\parents}{Pa} 

%% file: tables/main_table_llama_family.tex
\begin{table}[t]
\centering
\caption{WikiText-2 and C4 perplexity ($\downarrow$) for joint
W+A quantization at 4-bit precision on the Llama family. \method{} applies $\boldsymbol{\beta}$-rescaling to activations without calibration data. SmoothQuant and AWQ+ use activation statistics from 256 calibration samples. SINQ+ is our extension of SINQ to \green{W+A}. QuaRot rotates activations/weights online with a Hadamard transform (no calibration). 
\textbf{Bold}: best calibration-free (\textbf{CF}) result. ($\dagger$): degenerate result $(PPL>1000)$; method incompatible with format.
\cmark: calibration-free. \xmark: requires calibration data.}
\label{tab:ppl_wa_llama}
\setlength{\tabcolsep}{3pt}
\renewcommand{\arraystretch}{0.9}
\resizebox{0.85\textwidth}{!}
{%
\begin{tabular}{@{}cllc cccccc@{}}
\toprule
\multirow{2}{*}[-2pt]{\textbf{Bits}} & \multirow{2}{*}[-2pt]{\textbf{Format}} & \multirow{2}{*}[-2pt]{\textbf{Method}} & \multirow{2}{*}[-2pt]{\textbf{CF}} &
\multicolumn{3}{c}{\textbf{WikiText-2 PPL} $\downarrow$} &
\multicolumn{3}{c}{\textbf{C4 PPL} $\downarrow$} \\
\cmidrule(lr){5-7}\cmidrule(lr){8-10}
 & & & &
\textbf{Llama-2-7B} &
\textbf{Llama-3.1-8B} &
\textbf{Llama-3.2-1B} &
\textbf{Llama-2-7B} &
\textbf{Llama-3.1-8B} &
\textbf{Llama-3.2-1B} \\
\midrule
\multicolumn{3}{l}{BF16 (reference)} & --
  & 5.11 & 5.62 & 8.67
  & 7.04 & 8.82 & 13.53 \\
\midrule

\multirow{28}{*}{\rotatebox[origin=c]{90}{\textbf{4-bit}}}

& \multirow{7}{*}{\shortstack[l]{
    A: \hspace{1.6pt}INT4 \\ W: INT4 \\ $(G{=}64)$}}
  & RTN          & \cmark
    & 5.73 & 7.11 & 12.06
    & 7.97 & 11.99 & 19.69 \\
& & AWQ+         & \xmark
    & 5.57   & 6.83    & 12.46
    & 7.73   & 11.51    & 20.24   \\
& & SmoothQuant  & \xmark
    & 5.68 & 6.98 & 13.37
    & 7.91 & 11.75 & 21.99 \\
& & SINQ+         & \cmark
    & 5.61 & 6.83 & 12.36
    & 7.80 & 11.45 & 20.68   \\
& & QuaRot       & \cmark
    & 6.20 & 7.05 & 12.67
    & 8.89 & 12.10 & 20.96 \\
& & \hl\method{}    & \hl\cmark
    & \hl\textbf{5.56} & \hl\textbf{6.73} & \hl\textbf{11.88}
    & \hl\textbf{7.71} & \hl\textbf{11.35} & \hl\textbf{19.23} \\
\cmidrule(lr){2-10}

& \multirow{7}{*}{\shortstack[l]{
    A: \hspace{1.6pt}MXINT4 \\ W: MXINT4 \\ $(G{=}32)$}}
  & RTN          & \cmark
    & 5.97   & 7.63 & 17.43
    & 8.28   & 12.92 & 29.72   \\
& & AWQ+         & \xmark
    & 5.74   & 7.28    & 17.18
    & 7.97   & 12.28   & 28.17   \\
& & SmoothQuant  & \xmark
    & 5.91 & 7.48 & 15.79
    & 8.31 & 12.60 & 26.86 \\
& & SINQ+         & \cmark
    & 8.58 & 20.87 & 174.71
    & 14.33 & 34.40 & 261.27   \\
& & QuaRot       & \cmark
    & 12.20 & 8.76 & 21.16
    & 17.35 & 15.36 & 35.50 \\
& & \hl\method{}    & \hl\cmark
    & \hl\textbf{5.76} & \hl\textbf{7.15} & \hl\textbf{13.44}
    & \hl\textbf{7.94} & \hl\textbf{11.98} & \hl\textbf{21.86} \\
\cmidrule(lr){2-10}

& \multirow{7}{*}{\shortstack[l]{
    A: \hspace{1.6pt}MXFP4 \\ W: MXFP4 \\ $(G{=}32)$}}
  & RTN          & \cmark
    & 7.29   & 7.62 & 14.35
    & 11.45   & 12.84 & 23.76   \\
& & AWQ+         & \xmark
    & 6.44   & 7.82    & 13.57
    & 9.10   & 12.94    & 22.26   \\
& & SmoothQuant  & \xmark
    & 6.77 & 7.44 & 13.94
    & 10.67 & 12.51 & 23.11 \\
& & SINQ+         & \cmark
    & 32.88 & 52.76 & 135.57
    & 71.64 & 62.52 & 194.51   \\
& & QuaRot       & \cmark
    & 11.04 & 8.27 & 15.91
    & 14.63 & 14.00 & 26.76 \\
& & \hl\method{}    & \hl\cmark
    & \hl\textbf{6.38} & \hl\textbf{7.35} & \hl\textbf{13.13}
    & \hl\textbf{9.70} & \hl\textbf{12.70} & \hl\textbf{21.66} \\
\cmidrule(lr){2-10}

& \multirow{7}{*}{\shortstack[l]{
    A: \hspace{1.6pt}NVFP4 \\ W: NVFP4 \\ $(G{=}16)$}}
  & RTN          & \cmark
    & 5.70   & 9.04 & 14.10
    & 7.92   & 16.17 & 23.56   \\
& & AWQ+         & \xmark
    & 6.51   &  22.79  & 15.90
    & 9.53   & 41.31    & 26.33   \\
& & SmoothQuant  & \xmark
    & 5.53 & 7.32 & 12.40
    & 7.63 & 12.30 & 20.26 \\
& & SINQ+         & \cmark
    & {Div$\dagger$}  & {Div$\dagger$} & {Div$\dagger$}
    & {Div$\dagger$}  & {Div$\dagger$} & {Div$\dagger$} \\
& & QuaRot       & \cmark
    & {Div$\dagger$} & {Div$\dagger$} & {Div$\dagger$}
    & {Div$\dagger$} & {Div$\dagger$} & {Div$\dagger$} \\
& & \hl\method{}    & \hl\cmark
    & \hl\textbf{5.54} & \hl\textbf{6.83} & \hl\textbf{10.91}
    & \hl\textbf{7.64} & \hl\textbf{11.53} & \hl\textbf{17.45} \\
\bottomrule
\end{tabular}%
}
\end{table}

%% file: tables/downstream_main.tex
\begin{table}[t]
\centering
\caption{Llama-3.1-8B downstream accuracy (0-shot, \%, higher is better $\uparrow$) 
for RTN, AWQ+, SmoothQuant, SINQ+, \method{}, and QuaRot, across all four evaluated formats; QuaRot is omitted entirely for NVFP4, since QuaRot's Hadamard rotation redistributes activation magnitude in a way that collapses NVFP4's narrow-range E4M3 block scale. \textbf{Bold}: best calibration-free (CF) result per column within each model+format block. \cmark: calibration-free. \xmark: requires calibration data.}
\label{tab:downstream}
\scriptsize
\setlength{\tabcolsep}{3pt}
\renewcommand{\arraystretch}{0.8}
{%
\begin{tabular}{l l c r r r r r}
\toprule
\multicolumn{8}{c}{\textbf{Llama-3.1-8B}} \\
\midrule
Format & Method & CF & MMLU & HellaSwag & PIQA & ARC-Easy & \green{WinoGrande} \\
\midrule

\multirow{7}{*}{INT4 (G=64)}
  & RTN & \cmark & 54.89 & 56.44 & \bfseries 77.86 & 77.40 & 68.90 \\
& AWQ+ & \xmark & 55.78 & 56.65 & 78.51 & 77.19 & 69.06 \\
& SmoothQuant & \xmark & 55.24 & 55.93 & 77.31 & 77.44 & 68.75 \\
& SINQ+ & \cmark & 55.50 & 56.68 & 77.09 & 77.65 & 67.72 \\
& QuaRot & \cmark & 54.21 & 56.16 & 77.04 & 77.40 & \bfseries 70.17 \\
& \hl\method{} & \hl\cmark & \hl \bfseries 55.61 & \hl \bfseries 56.83 & \hl 77.37 & \hl \bfseries 77.68 & \hl 69.77 \\
\midrule
\multirow{7}{*}{MXINT4 (G=32)}
  & RTN & \cmark & 50.94 & 55.06 & 76.39 & 74.79 & 68.27 \\
& AWQ+ & \xmark & 52.30 & 55.31 & 76.77 & 73.82 & 69.61 \\
& SmoothQuant & \xmark & 52.93 & 54.83 & 76.61 & 75.88 & 66.14 \\
& SINQ+ & \cmark & 30.32 & 42.41 & 68.55 & 63.26 & 58.33 \\
& QuaRot & \cmark & 43.53 & 50.16 & 74.76 & 71.17 & 67.48 \\
& \hl\method{} & \hl\cmark & \hl \bfseries 53.53 & \hl \bfseries 55.28 & \hl \bfseries 77.37 & \hl \bfseries 76.89 & \hl \bfseries 70.32 \\
\midrule
\multirow{7}{*}{MXFP4 (G=32)}
  & RTN & \cmark & 51.51 &  54.17 & 76.71 & 76.43 & 68.03 \\
& AWQ+ & \xmark & 49.85 & 52.75 & 76.39 & 75.34 & 68.11 \\
& SmoothQuant & \xmark & 51.60 & 54.11 & 76.12 & 73.86 & 70.09 \\
& SINQ+ & \cmark & 25.36 & 37.60 & 64.36 & 54.21 & 55.64 \\
& QuaRot & \cmark & 48.69 & 52.81 & 75.84 & \bfseries 77.19 & 68.90 \\
& \hl\method{} & \hl\cmark & \hl \bfseries 52.10 & \hl \bfseries 54.66 & \hl \bfseries 76.77 & \hl 76.22 & \hl \bfseries 69.69 \\
\midrule
\multirow{5}{*}{NVFP4 (G=16)}
  & RTN & \cmark & 43.87 & 53.02 & 74.81 & 71.80 & 65.67 \\
& AWQ+ & \xmark & 31.19 & 45.10 & 71.06 & 63.17 & 62.75 \\
& SmoothQuant & \xmark & 55.67 & 55.60 & 77.53 & 75.80 & 68.90 \\
& SINQ+ & \cmark & 25.51 & 25.41 & 53.43 & 24.62 & 49.17 \\
& \hl\method{} & \hl\cmark & \hl \bfseries 57.76 & \hl \bfseries 56.65 & \hl \bfseries 78.24 & \hl \bfseries 77.90 & \hl \bfseries 71.67 \\

\bottomrule
\end{tabular}}
\end{table}

%% file: appendix/optimization_bcd.tex
\newpage
\section{Optimization background}
\label{app:optimization}

The dual-scale optimization problem introduced in Section~\ref{sec:bcd}
is non-convex and involves a non-differentiable quantization operator $Q(\cdot)$,
making direct gradient-based optimization infeasible. We address this by
combining two classical techniques from optimization: \emph{variable splitting},
which decouples the conflicting roles of the scale vectors, and \emph{block
coordinate descent}, which exploits the resulting structure to obtain
closed-form updates. We briefly review both tools here.

\subsection{Variable splitting}
\label{app:variable_splitting}

Variable splitting~\citep{boyd2011admm, tseng1991applications} is a technique
for handling optimization problems where a single variable plays multiple
conflicting roles in the objective or constraints. The idea is to introduce
separate copies of the variable, one for each role, and enforce equality
between the copies as an explicit constraint. This transforms a tightly coupled,
difficult problem into a more structured one where each copy can be optimized
independently given the others, subject to the coupling constraint.

In our dual-scale formulation in \eqref{eq:primal}, each scale vector plays two distinct roles: as
an \emph{inner} scale applied to the argument of $Q(\cdot)$, which determines the
quantization grid assignment, i.e., which discrete value each weight is mapped to, and
as an \emph{outer} scale applied after applying $Q(\cdot)$, which controls how those
discrete values are reconstructed in full precision. These two roles are in
tension: the inner scale must be chosen to minimize quantization error by
aligning the weight distribution with the quantization grid, while the optimal
outer scale, given a fixed quantized matrix, is determined by a least-squares
criterion that is blind to the quantization grid itself. Because both roles are
played by the same variable, the objective in \eqref{eq:primal}
is non-differentiable and non-convex in $(\boldsymbol{\alpha}, \boldsymbol{\beta})$
jointly.

Variable splitting resolves this conflict by introducing independent copies
$(\boldsymbol{\alpha}_{\mathrm{in}}, \boldsymbol{\alpha}_{\mathrm{out}})$ and
$(\boldsymbol{\beta}_{\mathrm{in}}, \boldsymbol{\beta}_{\mathrm{out}})$,
one pair for each scaling axis (row and column), and enforcing the consistency constraints
$\boldsymbol{\alpha}_{\mathrm{in}} = \boldsymbol{\alpha}_{\mathrm{out}}$ and
$\boldsymbol{\beta}_{\mathrm{in}} = \boldsymbol{\beta}_{\mathrm{out}}$
explicitly. With the inner scales (and hence
$\hat{\mathbf{Q}}_{\boldsymbol{\alpha}_{\mathrm{in}},\boldsymbol{\beta}_{\mathrm{in}}}$
held fixed,)
the relaxed objective becomes bilinear in the outer scales
$\boldsymbol{\alpha}_{\mathrm{out}}^{-1}$ and
$\boldsymbol{\beta}_{\mathrm{out}}^{-1}$, and decomposes into independent
scalar least-squares problems per row and per column, each with a closed-form
solution as derived in Section~\ref{sec:bcd}. After each pair of
closed-form outer scale updates, the consistency constraints are re-enforced
by synchronizing the inner scales with the newly computed outer scales, after
which $\hat{\mathbf{Q}}_{\boldsymbol{\alpha}_{\mathrm{in}},\boldsymbol{\beta}_{\mathrm{in}}}$
is recomputed before the next BCD iteration. For brevity, we drop the subscripts
and write $\hat{\mathbf{Q}}$ hereafter.
This two-step structure is what makes the overall algorithm tractable and efficient in practice. The iterative procedure for cycling through these updates is formalised as Block Coordinate Descent, described next.

\subsection{Block coordinate descent}
\label{app:bcd}

Block Coordinate Descent (BCD)~\citep{tseng2001convergence} is a classical
iterative optimization strategy for problems involving multiple groups of
variables. Rather than jointly optimizing all variables simultaneously, which may be intractable or computationally prohibitive, BCD partitions the variable set into blocks
and cycles through them, optimizing each block exactly (or approximately) while
holding the remaining blocks fixed. Formally, given an objective
$f(\mathbf{u}_1, \mathbf{u}_2, \ldots, \mathbf{u}_K)$, BCD performs the update
\begin{equation}
    \mathbf{u}_k^{(t+1)} =
    \arg\min_{\mathbf{u}_k}\;\;
    f\!\left(\mathbf{u}_1^{(t+1)}, \ldots, \mathbf{u}_{k-1}^{(t+1)},\,
    \mathbf{u}_k,\, \mathbf{u}_{k+1}^{(t)}, \ldots, \mathbf{u}_K^{(t)}\right)
\end{equation}
for each block $k = 1, \ldots, K$ in sequence, using the most recently updated
values of all other blocks. Under mild regularity conditions, in particular when each subproblem admits a unique minimizer, BCD is guaranteed to converge to a stationary point of the objective~\citep{tseng2001convergence}.

The practical appeal of BCD lies in its ability to exploit problem structure:
even when the joint problem is difficult, individual subproblems are often tractable. In our setting, once the inner scales and the quantized matrix
$\hat{\mathbf{Q}}$ are held fixed, the objective in~\eqref{eq:split} decomposes into independent scalar least-squares
subproblems for each row scale $\alpha_{\mathrm{out},i}$ and each column scale
$\beta_{\mathrm{out},j}$, each of which admits a closed-form solution. This
makes BCD a natural and efficient solver for the dual-scale problem.

%% file: appendix/closed_form_solution.tex
\section{Closed-form optimal solutions for scales}
\label{app:closed_form}
This appendix derives the closed-form updates
via variable splitting. We introduce independent inner and outer
copies of each scale vector. With inner scales
$(\boldsymbol{\alpha}_{\mathrm{in}}, \boldsymbol{\beta}_{\mathrm{in}})$
fixed, the objective function is
\begin{equation}
    \mathcal{L}(\boldsymbol{\alpha}_{\mathrm{out}},
    \boldsymbol{\beta}_{\mathrm{out}}) =
    \left\|\mathbf{W} -
    \mathrm{Diag}(\boldsymbol{\alpha}_{\mathrm{out}})^{-1}
    \hat{\mathbf{Q}}\,
    \mathrm{Diag}(\boldsymbol{\beta}_{\mathrm{out}})^{-1}
    \right\|_F^2.
\end{equation}

\paragraph{Closed-Form Update for $\boldsymbol{\alpha}_{\mathrm{out}}$.}
With $\boldsymbol{\beta}_{\mathrm{out}}$ and $\hat{\mathbf{Q}}$ fixed, we
define the auxiliary matrix
$\mathbf{R} = \hat{\mathbf{Q}}\,\mathrm{Diag}(\boldsymbol{\beta}_{\mathrm{out}})^{-1}$,
which absorbs the fixed column scaling into the quantized values. The objective
in~\eqref{eq:split} then reduces to
\begin{equation}
    \mathrm{MSE}_{\mathbf{W}}(\boldsymbol{\alpha}_{\mathrm{out}}) =
    \left\|\mathbf{W} - \mathrm{Diag}(\boldsymbol{\alpha}_{\mathrm{out}})^{-1}
    \mathbf{R} \right\|_F^2
    = \sum_{i=1}^{m}
    \left\|\mathbf{W}_{i,:} - \frac{1}{\alpha_{\mathrm{out},i}}\mathbf{R}_{i,:}
    \right\|_2^2.
\end{equation}
This objective decomposes into $m$ independent scalar
subproblems, one per row, each involving only the scalar $\alpha_{\mathrm{out},i}$.
To cast each subproblem as a standard least-squares problem, we substitute
$u_i = \alpha_{\mathrm{out},i}^{-1}$, which linearizes the dependence on the
scale. Setting the derivative with respect to $u_i$ to zero:
\begin{equation}
    \frac{\partial\, \mathrm{MSE}_{\mathbf{W}}}{\partial u_i} =
    -2\,\mathbf{R}_{i,:}\!\left(\mathbf{W}_{i,:} -
    u_i\,\mathbf{R}_{i,:}\right)^\top = 0,
\end{equation}
and solving for $u_i$ gives
\begin{equation}
    u_i^* = \frac{\mathbf{W}_{i,:}\,\mathbf{R}_{i,:}^\top}
    {\|\mathbf{R}_{i,:}\|_2^2}.
\end{equation}
Inverting back to the original parametrization yields the closed-form update
for the $i$-th row scale:
\begin{equation}
    \alpha_{\mathrm{out},i}^* =
    \frac{\|\mathbf{R}_{i,:}\|_2^2}
    {\mathbf{W}_{i,:}\,\mathbf{R}_{i,:}^\top},
    \quad\text{where}\quad
    \mathbf{R}_{i,:} =
    \hat{\mathbf{Q}}_{i,:}\,\mathrm{Diag}(\boldsymbol{\beta}_{\mathrm{out}})^{-1}.
\end{equation}
Intuitively, $\alpha_{\mathrm{out},i}^*$ scales the $i$-th row so that the quantized row $\mathbf{R}_{i,:}$ best approximates the original row $\mathbf{W}_{i,:}$ in the least-squares sense: it is large when the quantized row underestimates the original, and small when it overestimates.
After computing $\boldsymbol{\alpha}_{\mathrm{out}}^*$, the format constraint
$\boldsymbol{\alpha}_{\mathrm{out}}^{\odot -1} \in
    \mathbb{F}_\alpha^m\;\;$ is enforced by projecting
each $1/ \alpha_{\mathrm{out},i}^*$ to the nearest representable value in
$\mathbb{F}_{\alpha}$.

\paragraph{Closed-Form Update for $\boldsymbol{\beta}_{\mathrm{out}}$.}
The derivation for the column scales follows the same structure. With
$\boldsymbol{\alpha}_{\mathrm{out}}$ and $\hat{\mathbf{Q}}$ fixed, we define
$\mathbf{S} = \mathrm{Diag}(\boldsymbol{\alpha}_{\mathrm{out}})^{-1}\hat{\mathbf{Q}}$,
which absorbs the fixed row scaling into the quantized values. The objective
reduces to
\begin{equation}
    \mathrm{MSE}_{\mathbf{W}}(\boldsymbol{\beta}_{\mathrm{out}}) =
    \left\|\mathbf{W} -
    \mathbf{S}\,\mathrm{Diag}(\boldsymbol{\beta}_{\mathrm{out}})^{-1}
    \right\|_F^2
    = \sum_{j=1}^{n}
    \left\|\mathbf{W}_{:,j} - \frac{1}{\beta_{\mathrm{out},j}}\mathbf{S}_{:,j}
    \right\|_2^2,
\end{equation}
which decomposes into $n$ independent scalar subproblems, one per column.
Substituting $v_j = \beta_{\mathrm{out},j}^{-1}$ and setting the derivative
with respect to $v_j$ to zero:
\begin{equation}
    \frac{\partial\, \mathrm{MSE}_{\mathbf{W}}}{\partial v_j} =
    -2\,\mathbf{S}_{:,j}^\top\!\left(\mathbf{W}_{:,j} -
    v_j\,\mathbf{S}_{:,j}\right) = 0,
\end{equation}
and solving for $v_j$ gives
\begin{equation}
    v_j^* = \frac{\mathbf{S}_{:,j}^\top\,\mathbf{W}_{:,j}}
    {\|\mathbf{S}_{:,j}\|_2^2}.
\end{equation}
Inverting back yields the closed-form update for the $j$-th column scale:
\begin{equation}
    \beta_{\mathrm{out},j}^* =
    \frac{\|\mathbf{S}_{:,j}\|_2^2}
    {\mathbf{S}_{:,j}^\top\,\mathbf{W}_{:,j}},
    \quad\text{where}\quad
    \mathbf{S}_{:,j} =
    \mathrm{Diag}(\boldsymbol{\alpha}_{\mathrm{out}})^{-1}\hat{\mathbf{Q}}_{:,j}.
\end{equation}
As with the row scales, the format constraint $\boldsymbol{\beta}_{\mathrm{out}}^{\odot -1} \in
    \mathbb{F}_\beta^n\;\;$ is enforced by projecting each $1/\beta_{\mathrm{out},j}^*$
to the nearest representable value in $\mathbb{F}_{\beta}$ after the update.

%% file: appendix/theory1_proof.tex
\section{Proof of Theorem~\ref{thm:convergence}.}
\label{app:proof_theory1}

We first establish the uniqueness of the per-block subproblem
solutions (Lemmas~\ref{lem:alpha_unique} and~\ref{lem:beta_unique}),
then prove the theorem. For brevity, define
\begin{equation}
    g(\boldsymbol{\alpha}, \boldsymbol{\beta}) :=
    \bigl\|\mathbf{W} -
    \mathrm{Diag}(\boldsymbol{\alpha})^{-1}\,
    \hat{\mathbf{Q}}^{(T)}\,
    \mathrm{Diag}(\boldsymbol{\beta})^{-1}\bigr\|_F^2,
\end{equation}
the objective \eqref{eq:primal} with $\hat{\mathbf{Q}}$
\emph{frozen} at its post-stabilization value
$\hat{\mathbf{Q}}^{(T)}$. By the stabilization hypothesis, as empirically verified in Figure~\ref{fig:qhat_stable} (main text) and Figure~\ref{fig:llama_qhat_stable} (appendix),
$\hat{\mathbf{Q}}^{(t)} = \hat{\mathbf{Q}}^{(T)}$ for all
$t \geq T$, so for $t \geq T$ we have
$g(\boldsymbol{\alpha}^{(t)}, \boldsymbol{\beta}^{(t)}) =
\mathrm{MSE}_{\mathbf{W}}(\boldsymbol{\alpha}^{(t)},
\boldsymbol{\beta}^{(t)})$. For $t < T$, $g$ and
$\mathrm{MSE}_{\mathbf{W}}$ differ in general because
$\hat{\mathbf{Q}}^{(t)} \neq \hat{\mathbf{Q}}^{(T)}$. Working
with the function $g$ in the post-stabilization regime
allows us to invoke 
BCD convergence theorem, which
requires a stationary objective.
\begin{figure}[t]
\centering
\includegraphics[width=0.97\linewidth]{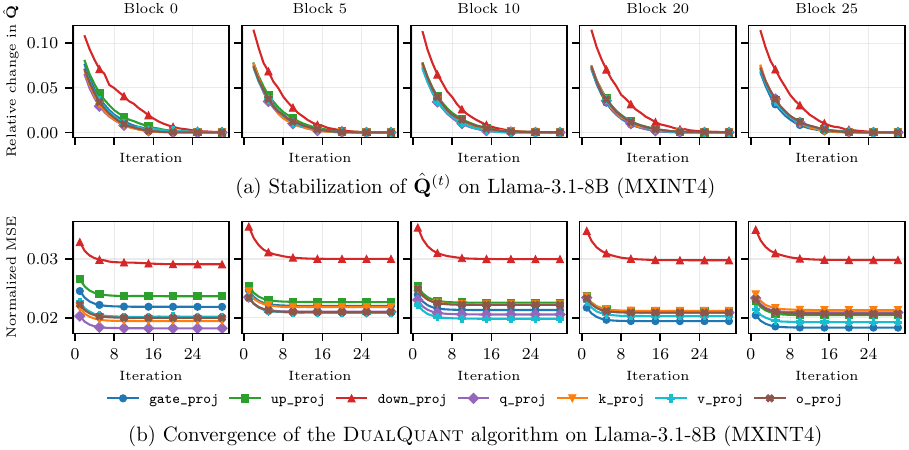}
\caption{\textbf{(a)} Relative change $ \scriptstyle \|\hat{\mathbf{Q}}^{(t)} -
\hat{\mathbf{Q}}^{(t-1)}\|_F / \|\hat{\mathbf{Q}}^{(t-1)}\|_F$
vs.\ BCD iteration for five transformer blocks.
The $\hat{\mathbf{Q}}^{(t)}$ change decays to near zero by $t{\approx}16$ for all layers and blocks, validating the stabilization condition of Theorem~\ref{thm:convergence}.
\textbf{(b)} Normalized MSE
$ \scriptstyle \|\mathbf{W} - \hat{\mathbf{W}}\|_F^2 / \|\mathbf{W}\|_F^2$
vs.\ BCD iteration for the same blocks. All curves decrease monotonically and plateau within ${\approx}10$ iterations,
confirming Theorem~\ref{thm:convergence}(i).
Together with the Qwen2.5-7B results in Figure~\ref{fig:qhat_stable}, this illustrates that the convergence behavior of \method{} is consistent across the two model families evaluated here. Notably, \texttt{down\_proj} consistently shows the highest residual MSE,
consistent with it being among the harder projections to
quantize~\citep{frantar2022gptq}.
}
\label{fig:llama_qhat_stable}
\end{figure}

\begin{lemma}[Unique row-scale minimizer]
\label{lem:alpha_unique}
Under Assumption~\ref{ass:nondegenerate}, for any fixed
$\boldsymbol{\beta} \in \mathbb{R}_{+}^{n}$, the subproblem
$\min_{\boldsymbol{\alpha}} g(\boldsymbol{\alpha},
\boldsymbol{\beta})$ admits a unique minimizer
$\boldsymbol{\alpha}^* \in \mathbb{R}_{+}^{m}$.
\end{lemma}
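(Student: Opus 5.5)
The plan is to exploit the separability of $g$ in the row scales once $\boldsymbol{\beta}$ and $\hat{\mathbf{Q}}^{(T)}$ are frozen. Set $\mathbf{R} = \hat{\mathbf{Q}}^{(T)}\,\mathrm{Diag}(\boldsymbol{\beta})^{-1}$, which is well defined because $\boldsymbol{\beta}\in\mathbb{R}_+^n$. Then
\begin{equation*}
g(\boldsymbol{\alpha},\boldsymbol{\beta}) = \sum_{i=1}^m \bigl\|\mathbf{W}_{i,:} - u_i \mathbf{R}_{i,:}\bigr\|_2^2, \qquad u_i = \alpha_i^{-1}.
\end{equation*}
This is exactly the decomposition of Appendix~\ref{app:closed_form}. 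Minimizing over $\boldsymbol{\alpha}$ is therefore equivalent to minimizing each scalar summand independently. A minimizer of the sum is unique if and only if each summand has a unique minimizer.

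For a fixed row $i$, the summand $h_i(u) = \|\mathbf{W}_{i,:}\|_2^2 - 2u\,\mathbf{W}_{i,:}\mathbf{R}_{i,:}^\top + u^2\|\mathbf{R}_{i,:}\|_2^2$ is a quadratic in $u$. Assumption~\ref{ass:nondegenerate} gives $\hat{\mathbf{Q}}^{(T)}_{i,:}\neq \mathbf{0}$. Since $\mathrm{Diag}(\boldsymbol{\beta})^{-1}$ is invertible with positive entries, this yields $\mathbf{R}_{i,:}\neq\mathbf{0}$ and hence $\|\mathbf{R}_{i,:}\|_2^2>0$. So $h_i$ is strictly convex and has the unique stationary point $u_i^* = \mathbf{W}_{i,:}\mathbf{R}_{i,:}^\top/\|\mathbf{R}_{i,:}\|_2^2$ derived in Appendix~\ref{app:closed_form}. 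Because the map $u\mapsto 1/u$ is a bijection on the nonzero reals, uniqueness in $u_i$ transfers to uniqueness in $\alpha_i$, giving $\alpha_i^* = \|\mathbf{R}_{i,:}\|_2^2/(\mathbf{W}_{i,:}\mathbf{R}_{i,:}^\top)$.

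The main obstacle is the positivity claim $\boldsymbol{\alpha}^*\in\mathbb{R}_+^m$. This requires $\mathbf{W}_{i,:}\mathbf{R}_{i,:}^\top>0$, and that condition is not literally part of Assumption~\ref{ass:nondegenerate}.

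\textbf{Step 1: show the inner product is positive.} I would use the sign-preserving structure of the quantizer. The formats considered are symmetric round-to-nearest on a grid symmetric about zero, and all scales are positive. So each entry $\hat{Q}_{ij}$ has the same sign as $\alpha^{\mathrm{in}}_i W_{ij}\beta^{\mathrm{in}}_j$, or is zero. The inner copies are positive, so $\hat{Q}_{ij}$ has the sign of $W_{ij}$. Then every term $W_{ij}\hat{Q}_{ij}/\beta_j$ is nonnegative, and at least one is strictly positive, because the row is nonzero and nonzero quantized entries come only from nonzero $W_{ij}$. This gives $\mathbf{W}_{i,:}\mathbf{R}_{i,:}^\top>0$.

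\textbf{Step 2: pass to $\boldsymbol{\alpha}$.} With Step 1, $u_i^*>0$, and hence $\alpha_i^*>0$. One subtlety remains: $u_i=0$ corresponds to no finite $\alpha_i$. But since $u_i^*>0$ is the unique minimizer over all of $\mathbb{R}$, restricting to $u>0$ loses nothing. The unique minimizer is therefore $\boldsymbol{\alpha}^*\in\mathbb{R}_+^m$.

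If the sign argument proved too format-specific, I would state the positivity of $\mathbf{W}_{i,:}\hat{\mathbf{Q}}_{i,:}^\top$ as part of the non-degeneracy convention.
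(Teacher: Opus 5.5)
Your proposal is correct and follows the paper's own proof: row-wise decomposition, the substitution $u_i=\alpha_i^{-1}$ giving a strictly convex quadratic (with $\|\mathbf{R}_{i,:}\|_2^2>0$ from Assumption~\ref{ass:nondegenerate}), and positivity of $\mathbf{W}_{i,:}\mathbf{R}_{i,:}^\top$ from the sign-preserving quantizer. Your version is slightly more careful than the paper's, which asserts that $Q(x)$ and $x$ always share a sign. You instead allow quantized entries to round to zero, use non-degeneracy to guarantee at least one strictly positive term, and explicitly dispose of the $u_i=0$ case.
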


\begin{proof}
The objective decomposes across rows as
$g(\boldsymbol{\alpha}, \boldsymbol{\beta}) =
\sum_{i=1}^{m} \|\alpha_i^{-1} \mathbf{R}_{i,:} -
\mathbf{W}_{i,:}\|_2^2$, where $\mathbf{R}_{i,:} =
\hat{\mathbf{Q}}^{(T)}_{i,:} \mathrm{Diag}(\boldsymbol{\beta})^{-1}$.
Substituting $u_i = \alpha_i^{-1}$ yields a strictly convex
quadratic in $u_i$ with unique minimizer
$u_i^* = \mathbf{W}_{i,:}\,\mathbf{R}_{i,:}^\top /
\|\mathbf{R}_{i,:}\|_2^2$, which is positive under
Assumption~\ref{ass:nondegenerate}. Specifically, Assumption~\ref{ass:nondegenerate} ensures $\|\mathbf{R}_{i,:}\|_2^2 > 0$ (non-zero denominator). The numerator $\mathbf{W}_{i,:}\,\mathbf{R}_{i,:}^\top$ is positive because all quantization formats considered use sign-preserving quantizers, meaning $Q(x)$ and $x$ share the same sign for any nonzero $x$. Since $\alpha_i, \beta_j > 0$, it follows that $W_{i,k}$ and $R_{i,k}$ share the same sign for all $k$, making the inner product strictly positive. Therefore $\alpha_i^* =
1/u_i^* \in \mathbb{R}_{+}$ is the unique minimizer.
\end{proof}

\begin{lemma}[Unique column-scale minimizer]
\label{lem:beta_unique}
Under Assumption~\ref{ass:nondegenerate}, for any fixed
$\boldsymbol{\alpha} \in \mathbb{R}_{+}^{m}$, the subproblem
$\min_{\boldsymbol{\beta}} g(\boldsymbol{\alpha},
\boldsymbol{\beta})$ admits a unique minimizer
$\boldsymbol{\beta}^* \in \mathbb{R}_{+}^{n}$.
\end{lemma}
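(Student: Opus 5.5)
The plan is to mirror the proof of Lemma~\ref{lem:alpha_unique} with rows and columns exchanged. With $\boldsymbol{\alpha} \in \mathbb{R}_{+}^{m}$ fixed and $\hat{\mathbf{Q}}^{(T)}$ frozen, define $\mathbf{S} = \mathrm{Diag}(\boldsymbol{\alpha})^{-1}\hat{\mathbf{Q}}^{(T)}$. The Frobenius norm splits over columns, so
\[
g(\boldsymbol{\alpha},\boldsymbol{\beta}) = \sum_{j=1}^{n} \bigl\|\mathbf{W}_{:,j} - \beta_j^{-1}\mathbf{S}_{:,j}\bigr\|_2^2 .
\]
Each summand depends only on $\beta_j$. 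The joint minimization over $\boldsymbol{\beta}$ therefore reduces to $n$ independent scalar problems, and $\boldsymbol{\beta}^*$ is unique if and only if each scalar problem has a unique positive minimizer.

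For each $j$, substitute $v_j = \beta_j^{-1}$. The summand becomes $\|\mathbf{S}_{:,j}\|_2^2 v_j^2 - 2 v_j\,\mathbf{S}_{:,j}^\top\mathbf{W}_{:,j} + \|\mathbf{W}_{:,j}\|_2^2$. This quadratic is strictly convex whenever $\|\mathbf{S}_{:,j}\|_2^2>0$, and its unique minimizer is $v_j^* = \mathbf{S}_{:,j}^\top\mathbf{W}_{:,j}/\|\mathbf{S}_{:,j}\|_2^2$, which matches the closed form of Appendix~\ref{app:closed_form}. Because $\beta_j \mapsto \beta_j^{-1}$ is a bijection of $\mathbb{R}_+$ onto itself, $v_j^*>0$ gives the unique minimizer $\beta_j^* = 1/v_j^* \in \mathbb{R}_+$.

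Positivity of the numerator uses the same sign-preservation argument as before. The entries satisfy $S_{k,j} = \alpha_k^{-1}\hat{Q}^{(T)}_{k,j}$, where $\hat{Q}^{(T)}_{k,j}$ is the quantization of $\alpha_k W_{k,j}\beta_j$ and $\alpha_k,\beta_j>0$. A sign-preserving quantizer then gives $S_{k,j}W_{k,j}\ge 0$, with strict inequality whenever $S_{k,j}\neq 0$. Hence $\mathbf{S}_{:,j}^\top\mathbf{W}_{:,j}>0$ as soon as $\mathbf{S}_{:,j}\neq\mathbf{0}$.

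The main obstacle is that Assumption~\ref{ass:nondegenerate} is stated for rows, while this lemma needs columns. The assumption directly gives nonzero rows of $\hat{\mathbf{Q}}$, not $\hat{\mathbf{Q}}_{:,j}\neq\mathbf{0}$. I would first try to read the assumption symmetrically, taking non-degeneracy to cover columns of $\mathrm{Diag}(\boldsymbol{\alpha})\mathbf{W}\mathrm{Diag}(\boldsymbol{\beta})$ as well, which is justified by the same near-Gaussian argument. Failing that, I would fall back on the convention in that assumption. If $\hat{\mathbf{Q}}_{:,j}=\mathbf{0}$, the summand for column $j$ is the constant $\|\mathbf{W}_{:,j}\|_2^2$, independent of $\beta_j$, and we fix $\beta_j=1$. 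Uniqueness then holds modulo this convention, which is all the BCD argument needs.
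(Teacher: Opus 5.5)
Your proposal is correct and is essentially the paper's proof, which simply states that the argument is the row/column mirror of Lemma~\ref{lem:alpha_unique}: the objective decouples over columns, the substitution $v_j=\beta_j^{-1}$ gives a strictly convex quadratic, and sign preservation gives $\mathbf{S}_{:,j}^\top\mathbf{W}_{:,j}>0$. Your remark that Assumption~\ref{ass:nondegenerate} only excludes zero \emph{rows} is well taken---the paper's one-line ``symmetric'' proof tacitly uses the column analogue $\hat{\mathbf{Q}}^{(T)}_{:,j}\neq\mathbf{0}$, and strict uniqueness genuinely fails for a zero column, so you need either the symmetric reading of the assumption or your $\beta_j=1$ convention.
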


\begin{proof}
Symmetric to the proof of Lemma~\ref{lem:alpha_unique}, with
rows and columns interchanged.
\end{proof}

\begin{proof}[Proof of Theorem~\ref{thm:convergence}]
We prove the two parts of the theorem in turn: monotone
descent~(i) and stationarity of limit points~(ii).

\medskip
\noindent\textbf{Part (i) --- monotone descent.}
At iteration $t \geq T$ with $\boldsymbol{\beta}^{(t)}$ and
$\hat{\mathbf{Q}}^{(T)}$ fixed,
Algorithm~\ref{alg:dual_scale} sets $\boldsymbol{\alpha}^{(t+1)}$
to the unique minimizer of $g(\boldsymbol{\alpha},
\boldsymbol{\beta}^{(t)})$ over $\boldsymbol{\alpha}$
(Lemma~\ref{lem:alpha_unique}). Since
$\boldsymbol{\alpha}^{(t)}$ is feasible:
\begin{equation}
    g(\boldsymbol{\alpha}^{(t+1)}, \boldsymbol{\beta}^{(t)})
    \leq
    g(\boldsymbol{\alpha}^{(t)}, \boldsymbol{\beta}^{(t)}).
    \label{eq:descent_alpha}
\end{equation}
By the symmetric argument, with $\boldsymbol{\alpha}^{(t+1)}$
and $\hat{\mathbf{Q}}^{(T)}$ fixed,
Algorithm~\ref{alg:dual_scale} sets $\boldsymbol{\beta}^{(t+1)}$
to the unique minimizer of $g(\boldsymbol{\alpha}^{(t+1)},
\boldsymbol{\beta})$ over $\boldsymbol{\beta}$
(Lemma~\ref{lem:beta_unique}), giving:
\begin{equation}
    g(\boldsymbol{\alpha}^{(t+1)}, \boldsymbol{\beta}^{(t+1)})
    \leq
    g(\boldsymbol{\alpha}^{(t+1)}, \boldsymbol{\beta}^{(t)}).
    \label{eq:descent_beta}
\end{equation}
Chaining \eqref{eq:descent_alpha} and \eqref{eq:descent_beta}:
\begin{equation}
    g(\boldsymbol{\alpha}^{(t+1)}, \boldsymbol{\beta}^{(t+1)})
    \leq
    g(\boldsymbol{\alpha}^{(t)}, \boldsymbol{\beta}^{(t)})
    \quad \forall\, t \geq T,
\end{equation}
establishing part~(i). Since $g \geq 0$, the sequence
$\{g(\boldsymbol{\alpha}^{(t)}, \boldsymbol{\beta}^{(t)})\}_{t
\geq T}$ converges to a finite limit $g^* \geq 0$.

\medskip
\noindent\textbf{Part (ii) --- Stationarity of limit points.}
We invoke Theorem~4.1(c) of
~\citep{tseng2001convergence}, which states that for
cyclic block coordinate descent on a continuous objective
where each block subproblem has a unique minimizer, every
limit point of the iterates is a coordinate-wise stationary
point. The three required conditions hold as follows.

\begin{enumerate}[label=(\alph*), nosep, wide=0pt, leftmargin=*]
    \item \textbf{Continuity of $g$.} With $\hat{\mathbf{Q}}^{(T)}$
    fixed, $g(\boldsymbol{\alpha}, \boldsymbol{\beta})$ is a sum
    of rational functions of $\alpha_i$ and $\beta_j$ with
    nonzero denominators on the open positive orthant
    $\{\alpha_i, \beta_j > 0\}$, hence continuously
    differentiable on its domain. All iterates remain in this
    domain by construction under
    Assumption~\ref{ass:nondegenerate}.

    \item \textbf{Unique per-block minimizer.}
    Lemmas~\ref{lem:alpha_unique} and~\ref{lem:beta_unique}
    give uniqueness for both block subproblems. Uniqueness follows from the strict convexity of each scalar subproblem after the substitution $u_i = \alpha_i^{-1}$ (resp.\ $v_j = \beta_j^{-1}$), which yields a strictly convex quadratic with a single minimizer.

    \item \textbf{Boundedness of the iterates.}
    The level set $\{(\boldsymbol{\alpha}, \boldsymbol{\beta}) :
    g(\boldsymbol{\alpha}, \boldsymbol{\beta}) \leq
    g(\boldsymbol{\alpha}^{(T)}, \boldsymbol{\beta}^{(T)})\}$ is
    compact \footnote{\mahsa{After fixing the trivial reciprocal scaling ambiguity $((\boldsymbol{\alpha},\boldsymbol{\beta})\sim(c\boldsymbol{\alpha},\boldsymbol{\beta}/c)$, e.g., by normalizing one of the scale vectors.}}: $g \to \infty$ as any $\alpha_i$ or $\beta_j$
    approaches $0$ or $\infty$ (whenever $\hat{\mathbf{Q}}^{(T)}
    \neq \mathbf{0}$, ensured by
    Assumption~\ref{ass:nondegenerate}). Combined with the
    monotone decrease of part~(i), all iterates remain in this
    compact set.
\end{enumerate}

By Theorem~4.1(c) of
~\citep{tseng2001convergence}, every limit point $(\boldsymbol{\alpha}^*,
\boldsymbol{\beta}^*)$ of $\{(\boldsymbol{\alpha}^{(t)},
\boldsymbol{\beta}^{(t)})\}_{t \geq T}$ satisfies the
coordinate-wise stationarity conditions
$\boldsymbol{\alpha}^* = \arg\min_{\boldsymbol{\alpha}}
g(\boldsymbol{\alpha}, \boldsymbol{\beta}^*)$ and
$\boldsymbol{\beta}^* = \arg\min_{\boldsymbol{\beta}}
g(\boldsymbol{\alpha}^*, \boldsymbol{\beta})$, completing the
proof of part~(ii).
\end{proof}

\begin{remark}[Format-constrained projection]
\label{rem:projection}
Theorem~\ref{thm:convergence} is stated for unconstrained $\boldsymbol{\alpha} \in \mathbb{R}_+^m$ and $\boldsymbol{\beta} \in \mathbb{R}_+^n$. In practice, each scale is projected onto the discrete admissible set after each closed-form update. Since each subproblem in $u_i = \alpha_i^{-1}$ (resp.\ $v_j = \beta_j^{-1}$) is a strictly convex quadratic, 
the objective increases monotonically with distance from the unconstrained minimizer. The nearest admissible point to this unconstrained minimizer therefore always achieves the lowest constrained objective, so projecting to the nearest representable value guarantees monotone decrease at each step.
\end{remark}

\begin{remark}[On the stabilization assumption]
\label{rem:stabilization}
Theorem~\ref{thm:convergence} assumes that
$\hat{\mathbf{Q}}^{(t)}$ stabilizes at a finite iteration $T$.
Since $\hat{\mathbf{Q}}$ takes values in the finite set
$\mathbb{F}^{m \times n}$, stabilization is guaranteed
whenever the objective decreases monotonically across
$\hat{\mathbf{Q}}$ updates. We numerically observe this across model families and quantization formats. For example, for Qwen2.5-7B and Llama3.1-8B with MXINT4, Figures~\ref{fig:qhat_stable} and \ref{fig:llama_qhat_stable} show that $\hat{\mathbf{Q}}^{(t)}$ stabilizes within
$\approx 12$--$16$ iterations across every projection layer of every
evaluated decoder block, well within our default of $T = 15$ iterations used in all experiments.
\end{remark}

%% file: appendix/sinq.tex
\section{SINQ baseline: background and extension}
\label{app:sinq}

This appendix documents the SINQ~\citep{muller2025sinq} baseline
used in our experiments. We briefly review SINQ's scale-selection
procedure (Appendix~\ref{app:sinq_method}), describe our extension
to MX and NV formats (Appendix~\ref{app:sinq_extension}), and
explain why this extension exhibits the divergence behavior
observed in Table~\ref{tab:ppl_wa_llama} on those formats
(Appendix~\ref{app:sinq_limitations}).

\subsection{Background}
\label{app:sinq_method}

SINQ is a calibration-free dual-scaling method that selects row
scales $\boldsymbol{\alpha} \in \mathbb{R}_{+}^{m}$ and column
scales $\boldsymbol{\beta} \in \mathbb{R}_{+}^{n}$ to normalize
the weight matrix prior to quantization,
\begin{equation}
\label{eq:sinq_scaled}
    \breve{\mathbf{W}} = 
    \mathrm{Diag}(\boldsymbol{\alpha})^{-1}\,\mathbf{W}\,
    \mathrm{Diag}(\boldsymbol{\beta})^{-1}.
\end{equation}
The scales are obtained by minimizing a \emph{matrix imbalance}
objective that drives the per-row and per-column standard
deviations of $\breve{\mathbf{W}}$ toward uniformity. The
imbalance is reported as the ratio of the largest to the smallest
per-axis standard deviation of $\breve{\mathbf{W}}$,
\begin{equation}
\label{eq:sinq_imbalance}
    \mathcal{I}(\breve{\mathbf{W}}) =
    \frac{\max\bigl(
        \max_i \sigma_i(\breve{\mathbf{W}}),\;
        \max_j \tau_j(\breve{\mathbf{W}})
    \bigr)}
    {\min\bigl(
        \min_i \sigma_i(\breve{\mathbf{W}}),\;
        \min_j \tau_j(\breve{\mathbf{W}})
    \bigr)},
\end{equation}
where $\sigma_i$ and $\tau_j$ denote the standard deviations of
row $i$ and column $j$ of $\breve{\mathbf{W}}$. A perfectly
balanced matrix has $\mathcal{I}(\breve{\mathbf{W}}) = 1$.
SINQ minimizes $\mathcal{I}$ via a Sinkhorn--Knopp-style
alternating normalization~\citep{sinkhorn1967} that alternately
divides each row and column of $\breve{\mathbf{W}}$ by its
current standard deviation, accumulating the reciprocal products
into $\boldsymbol{\alpha}$ and $\boldsymbol{\beta}$. The
procedure is calibration-free, processes each layer
independently, and runs at roughly $1.1\times$ the cost of RTN.

SINQ was originally designed and evaluated for INT3, INT4,
and NF4, all of which store the per-block scales in unconstrained
BF16. 

\subsection{Extending SINQ to W+A and MX/NV formats}
\label{app:sinq_extension}
To produce baseline W+A results for MXINT4, MXFP4,
and NVFP4, we extend SINQ in two directions: \emph{(i)} to
joint W+A quantization by applying SINQ's column scales to
rescale activations before quantization (same format as
weights), and \emph{(ii)} to MX and NV formats by running
SINQ's algorithm to completion and then projecting the resulting
scales to its target data format (E8M0 for MX, FP8 UE4M3 for NVFP4) as a post-processing step. Since SINQ's optimization is unaware of scale format constraints, this projection only partially mitigates the mismatch: on MXINT4 and MXFP4 it leads to noticeable quality degradation, and on NVFP4 the scales
drift entirely outside the FP8 representable range, causing
complete divergence (marked $\dagger$ in the tables). 

Both extensions are our own and have not been validated by the
original authors; SINQ+ results on MX and NV formats should
accordingly be treated as best-effort baselines. We note that
even in its original weight-only setting without any of our
extensions, SINQ produces inferior scales to \method{} across
all formats (Appendix~\ref{app:weight_only}).

\subsection{Why Sinkhorn iteration diverges under format constraints}
\label{app:sinq_limitations}
The large perplexity values observed for SINQ on MXINT4 and
MXFP4 in Tables~\ref{tab:ppl_wa_llama}
and~\ref{tab:ppl_wa_qwen}, and the complete divergence on
NVFP4 (marked $\dagger$), indicate a fundamental
incompatibility between SINQ's format-unaware Sinkhorn
iteration and the discrete scale constraints of MX and NV
formats. In the unconstrained BF16 setting, Sinkhorn--Knopp iteration converges to a unique fixed point $(\boldsymbol{\alpha}^*,
\boldsymbol{\beta}^*)$~\citep{sinkhorn1964relationship}. The
fixed-point scale values lie on a continuum and can be
represented exactly in BF16. Once the admissible set is
restricted to a discrete format such as E8M0, however, this
property is lost. The fixed-point Sinkhorn target generally lies
between two representable E8M0 values, and the projection step
that follows each Sinkhorn iteration snaps the iterate to the
nearest representable scale rather than the true fixed point.
The next Sinkhorn step then re-derives a target that may point away
from the projected iterate, causing the iteration to oscillate between nearby E8M0 representatives without converging. \method{} avoids
this failure mode by construction. Its closed-form coordinate
updates are derived under the format constraint from the outset
and the projection step preserves the monotone descent property
of the BCD scheme (Theorem~\ref{thm:convergence}), so iterates
remain in the admissible set throughout.

\subsection{Comparison of objectives}
\label{app:sinq_vs_dualquant}

Beyond the format-compatibility issue analyzed in
Appendix~\ref{app:sinq_limitations}, SINQ's matrix imbalance
objective \eqref{eq:sinq_imbalance} measures a different quantity from the post-quantization reconstruction error: a lower imbalance does not necessarily translate to a lower MSE, since the imbalance metric is a function of $\breve{\mathbf{W}}$
alone and makes no reference to the quantization grid, which means
the same imbalance value corresponds to different post-quantization
errors under INT4, MXINT4, MXFP4, and NVFP4. Our weight
reconstruction MSE, in contrast, contains the format-specific
quantizer $Q(\cdot)$ explicitly, so the optimization is aware of
the grid that the scales will be evaluated against and can adapt
the scale assignment to non-uniform grids such as those of FP4 and
NVFP4. This explains why a method optimized for low matrix
imbalance does not automatically achieve low quantization error,
as illustrated in Figure~\ref{fig:motivation}. Furthermore, the reconstruction MSE is more directly tied to downstream metrics such as perplexity and accuracy than proxy objectives such as matrix imbalance, which helps explain why \method{}, by minimizing reconstruction error directly, exhibits stronger downstream performance than SINQ across most evaluated scenarios.

%% file: appendix/causal_effect_of_beta.tex
\section{Isolated activation-side effect of $\beta$}
\label{app:beta_causal}

Section~\ref{sec:dual_scale} establishes algebraically that the column scale $\beta$ reshapes
the per-channel statistics of the activations presented to the quantizer, and argues mechanistically that weight-MSE-optimal scales
attenuate the same channels where activation outliers concentrate. To confirm this is a genuine improvement in activation quantization quality, and not merely a by-product of better
weight quantization, we isolate $\beta$'s effect on activations by holding the quantized
weights fixed.

\paragraph{Setup.} We take \method{}'s quantized weight matrix $\hat{Q}$ for each projection
in block 0 of Llama-3.1-8B (INT4, $G{=}64$) as fixed and measure the Signal-to-Quantization-Noise
Ratio (SQNR, dB, higher is better) of the quantized activations with and without applying
$\beta$, using 256 representative input sequences. Block 0 is a representative early block;
the pattern holds across depth. All other conditions are
identical between the two runs.

\paragraph{Results.} Table~\ref{tab:beta_sqnr} reports per-projection activation SQNR.
Applying $\beta$ improves activation SQNR by $+0.33$ dB on average across projections and
by $+0.53$ dB at the block output level. The improvement is consistent across all seven
projection types, with the largest gains on \texttt{k\_proj} ($+0.81$ dB) and
\texttt{v\_proj} ($+0.65$ dB) --- precisely the projections where activation outliers are
most pronounced~\citep{dettmers2022llmint8,sun2024massive}. This confirms that $\beta$
provides a real, partial activation-side benefit that is separate from and additive to its
role in reducing weight quantization error. The gain is not calibration substitution: unlike
SmoothQuant, $\beta$ is derived entirely from the static weight matrix, yet still measurably \green{reduces the activation quantization error}.

\begin{table}[h]
\centering
\caption{Per-projection activation SQNR (dB, $\uparrow$) with and without \method{}'s
$\beta$, on Llama-3.1-8B block~0 (INT4, $G{=}64$). Quantized weights $\hat{Q}$ are held
fixed; only the presence of the $\beta$ pre-scaling varies. $\Delta > 0$ indicates
$\beta$ reduces activation quantization error. \green{Each $\Delta$ is computed at full precision and may differ slightly from the difference of the rounded columns.} Other blocks exhibit the same qualitative behavior.}
\label{tab:beta_sqnr}
\small
\setlength{\tabcolsep}{6pt}
\renewcommand{\arraystretch}{1.1}
\begin{tabular}{@{}lrrr@{}}
\toprule
\textbf{Projection} &
\textbf{$\beta$ applied} &
\textbf{$\beta$ not applied} &
\textbf{$\Delta$ (dB)} \\
\midrule
\texttt{k\_proj}    & 17.61 & 16.80 & $+0.81$ \\
\texttt{v\_proj}    & 17.45 & 16.80 & $+0.65$ \\
\texttt{up\_proj}   & 18.48 & 18.20 & $+0.27$ \\
\texttt{q\_proj}    & 17.05 & 16.80 & $+0.25$ \\
\texttt{gate\_proj} & 18.36 & 18.20 & $+0.15$ \\
\texttt{o\_proj}    & 17.40 & 17.34 & $+0.07$ \\
\texttt{down\_proj} & 16.76 & 16.70 & $+0.06$ \\
\midrule
\textit{Mean}       & 17.59 & 17.26 & $+0.33$ \\
\midrule
\textit{Block Output SQNR} & 17.98 & 17.44 & $+0.53$ \\
\bottomrule
\end{tabular}
\end{table}

%% file: appendix/baseline_methods.tex
\section{Additional details on baselines}
\label{app:baselines}
This appendix details each baseline used in the joint W+A quantization comparison (Tables~\ref{tab:ppl_wa_llama} and~\ref{tab:ppl_wa_qwen}). All calibration-based baselines (AWQ+ and SmoothQuant) use the same calibration set: $256$ sequences of length $8192$ randomly sampled from the WikiText-2 training split, strictly disjoint from the evaluation splits. All baselines use the same block sizes and scale formats as \method{} for each evaluated format (INT4 $G{=}64$; MXINT4/MXFP4 $G{=}32$ with E8M0; NVFP4 $G{=}16$ with FP8 UE4M3).  Below we describe the six baselines in detail.
\begin{itemize}[nosep, wide=0pt, leftmargin=*]
\item \textbf{RTN.} Applies round-to-nearest quantization independently to weights and activations using format-native per-row or per-block absmax scaling, with no rescaling, calibration, or optimization. Activations are quantized dynamically at inference using per-token statistics. RTN serves as the lower-bound baseline.
\item \textbf{AWQ+.} AWQ~\citep{lin2024awq} is a weight-only, activation-aware method that rescales weights based on high-activation channels. We extend it to W+A by adding RTN activation quantization after AWQ rescaling, and denote this method as AWQ+. Scales are computed using the official AWQ search procedure with a 20-candidate grid and clipping ratio search enabled. 

\item \textbf{SmoothQuant.} SmoothQuant~\citep{xiao2023smoothquant} uses calibration data to move activation outliers into weights via per-channel rescaling 
$s_j \propto \max_i |\mathbf{X}_{ij}|^\gamma / \max_i |\mathbf{W}_{ij}|^{1-\gamma}$. We evaluate the migration strength $\gamma \in \{0.5, 0.85\}$ 
as suggested in the original paper~\citep{xiao2023smoothquant}; since $\gamma = 0.5$ consistently yields  better performance across our experiments, we adopt it for SmoothQuant baselines.
After rescaling, weights and activations are quantized using the same block-wise quantizer as \method{}. It serves as the main calibration-based W+A baseline.

\item \textbf{SINQ+.} SINQ~\citep{muller2025sinq} is a
calibration-free weight-only dual-scale method 
originally designed for INT3/INT4 and NF4 with unconstrained BF16 scales. We extend it to W+A and MX/NV formats as described in
Appendix~\ref{app:sinq_extension}; however, since SINQ's
optimization is entirely unaware of scale format constraints,
projecting its output scales to the target data format only partially
mitigates the mismatch, causing degraded results on MXINT4
and MXFP4 and complete divergence on NVFP4 (marked $\dagger$).
SINQ+ results on MX and NV formats should be treated as
best-effort baselines.

\item \textbf{QuaRot.} QuaRot~\citep{ashkboos2024quarot} is a calibration-free W+A method that suppresses activation outliers by applying randomized Hadamard rotations to the hidden states, spreading outlier energy across channels while leaving the network output unchanged, since each rotation is absorbed into the following weights. The residual-stream rotation is fused offline; the value-path and down-projection rotations run online. Following rotation, weights and activations are quantized using the same block-wise quantizer as \method{}, with activations quantized dynamically based on post-rotation per-token statistics. QuaRot serves as the rotation-based counterpart to RTN, isolating the effect of rotation without weight-error compensation.


\end{itemize}

%% file: appendix/additional_reults.tex
\section{Additional W+A quantization results}
\label{app:additional_results}
This appendix provides supplementary quantization results
complementing the main experiments of
Section~\ref{sec:wa_quant}. We report 4-bit W+A results on
the Qwen model family and 8-bit W+A results across both
families, confirming that \method{}'s calibration-free
activation rescaling generalizes across model architectures,
quantization formats, and bit-widths.

\subsection{4-bit W+A quantization}
\label{app:4bit_wa}
\mahsa{
\paragraph{Perplexity.}
Table~\ref{tab:ppl_wa_qwen} reports WikiText-2 and C4
perplexity for joint W+A quantization on the Qwen family
across all four 4-bit formats. Results are broadly
consistent with the Llama family (Table~\ref{tab:ppl_wa_llama} in the main text): \method{} closely matches or surpasses 
SmoothQuant on the MX and NV formats without calibration
data, and outperforms the calibration-free baselines QuaRot and SINQ+ in most
settings, as these baselines frequently diverge or degrade severely on the block formats.
On NVFP4,
\method{} surpasses SmoothQuant on Qwen2.5-7B and Qwen3-0.6B (e.g., $6.53$ vs.\ $6.70$
and $23.77$ vs.\ $24.16$ on WikiText-2) and closely matches it on Qwen3-14B
($7.75$ vs.\ $7.73$), while SINQ+ diverges on all three models and QuaRot either
diverges or degrades drastically (e.g., $12.11$ and $173.38$ on Qwen2.5-7B and
Qwen3-0.6B), leaving \method{} as the strongest calibration-free method on this
format.
On MXFP4, \method{} outperforms SmoothQuant on Qwen3-0.6B (e.g., $27.46$ vs.\ $28.66$ on WikiText-2)
and achieves substantially better C4 perplexity ($42.59$ vs.\ $47.83$), while
SINQ+ collapses ($195.98$) and QuaRot degrades substantially ($68.96$); on
Qwen2.5-7B, QuaRot remains competitive ($7.38$), narrowly ahead of \method{}
($7.45$). On MXINT4, \method{} outperforms SmoothQuant on Qwen3-0.6B ($32.99$
vs.\ $33.63$) and matches it on Qwen3-14B ($8.42$ vs.\ $8.42$), while SINQ+
degrades severely ($353.83$) and QuaRot collapses
($131.31$). On INT4, \method{} remains competitive with SmoothQuant across all
three models, though slightly below on Qwen2.5-7B ($7.38$ vs.\ $7.10$); QuaRot
achieves the best calibration-free result on Qwen2.5-7B ($6.97$) but degrades
substantially on Qwen3-0.6B ($48.47$), while SINQ+ remains competitive on this
format. The results on C4 are consistent with WikiText-2, confirming that the
calibration-free activation rescaling generalizes across evaluation distributions. Overall, these results show that \method{} provides 
consistently strong calibration-free
W+A quantization across formats without relying on calibration data or format-specific rotation behavior.
}

\input{tables/main_table_Qwen_family}

\mahsa{
\paragraph{Zero-shot downstream tasks.}
Table~\ref{tab:downstream_qwen} reports zero-shot accuracy on MMLU, HellaSwag, PIQA,
ARC-Easy, and \green{WinoGrande} for Qwen3-14B across all four W+A quantization formats. Results
are consistent with the Llama-3.1-8B results (Table~\ref{tab:downstream} in the main text):
\method{} achieves the best calibration-free accuracy on nearly all format-task combinations.
On MXFP4 and MXINT4, \method{} outperforms both calibration-free baselines
QuaRot and SINQ+ on every task, often by a large margin, and stays close to calibration-based
SmoothQuant, surpassing it on \green{WinoGrande} (72.06 vs.\ 69.14 on MXFP4 and 70.64 vs.\ 66.30 on
MXINT4) while trailing narrowly on the remaining tasks (e.g., 72.91 vs.\ 73.26 MMLU on MXFP4
and 71.47 vs.\ 72.50 MMLU on MXINT4). On NVFP4,
\method{} is the strongest calibration-free method: SINQ+ collapses
to near-random accuracy (23.39 MMLU), whereas \method{} stays competitive with calibration-based
SmoothQuant (e.g., 75.06 vs.\ 75.97 MMLU, 78.99 vs.\ 78.94 PIQA), closing most of the gap
without any access to calibration data. On INT4, \method{} matches SmoothQuant closely
across all five tasks and attains the best calibration-free accuracy on four of the five, trailing
only SINQ+ on PIQA (76.71 vs.\ 77.80).
As on Llama, SINQ+ degrades severely on MX formats (62.38--62.58 MMLU on MXINT4 and
MXFP4) and collapses on NVFP4, confirming that the perplexity improvements of
\method{} over calibration-free baselines translate directly to downstream task quality.}
\input{tables/downstream_qwen_appndx}

\subsection{8-bit W+A quantization}
\label{app:8bit_wa}

\input{tables/ppl_8bit_weight_activation}
\green{Table~\ref{tab:ppl_8bit_wa} reports WikiText-2 perplexity for joint W+A
quantization at 8-bit precision across INT8, MXINT8, and MXFP8. At 8 bits the
quantization error is already small, so there is limited room for improvement
over BF16; nonetheless, \method{} remains the strongest calibration-free method
and is effectively lossless. On MXINT8 and MXFP8 it matches or surpasses the
calibration-based SmoothQuant on every model except MXFP8 Qwen3-14B, where it
trails by only $0.01$ ($7.64$ vs.\ $7.63$), all without any calibration data;
representative gains include $18.13$ vs.\ $18.18$ on Qwen3-0.6B and $7.53$ vs.\
$7.55$ on Qwen3-14B for MXINT8. On MXINT8, \method{} tracks the BF16 reference
almost exactly (e.g., $6.15$ vs.\ $6.15$ on Qwen2.5-7B and $18.13$ vs.\ $18.14$
on Qwen3-0.6B), confirming near-lossless quantization. In contrast, the other
calibration-free baseline, SINQ+, degrades severely on the MX formats even at
8 bits (e.g., $47.87$ and $53.41$ on Qwen3-0.6B for MXINT8 and MXFP8),
confirming that its format-unaware scales remain incompatible with MX block
constraints regardless of bit-width and underscoring the benefit of \method{}'s
format-aware dual scaling.}

%% file: tables/main_table_Qwen_family.tex
\begin{table}[t]
\centering
\caption{WikiText-2 and C4 perplexity ($\downarrow$) for joint weight-and-activation quantization at 4-bit precision on the Qwen family. \method{} applies $\boldsymbol{\beta}$-rescaling to activations without calibration data. SmoothQuant and AWQ+ use activation statistics from 256 calibration samples. SINQ+ is our extension of SINQ to W+A. QuaRot rotates activations/weights online with a Hadamard transform (no calibration).
\textbf{Bold}: best calibration-free (\textbf{CF}) result. ($\dagger$): degenerate result $(PPL>1000)$; method incompatible with format. \cmark: calibration-free. \xmark: requires calibration data.}
\label{tab:ppl_wa_qwen}
\setlength{\tabcolsep}{4pt}
\renewcommand{\arraystretch}{1.1}
\resizebox{0.92\textwidth}{!}{%
\begin{tabular}{@{}cllc cccccc@{}}
\toprule
& & & & \multicolumn{3}{c}{\textbf{WikiText-2 PPL} $\downarrow$} & \multicolumn{3}{c}{\textbf{C4 PPL} $\downarrow$} \\
\cmidrule(lr){5-7}\cmidrule(lr){8-10}
\textbf{Bits} & \textbf{Format} & \textbf{Method} & \textbf{CF} & \textbf{Qwen2.5-7B} & \textbf{Qwen3-0.6B} & \textbf{Qwen3-14B} & \textbf{Qwen2.5-7B} & \textbf{Qwen3-0.6B} & \textbf{Qwen3-14B} \\
\midrule
\multicolumn{3}{l}{BF16 (reference)} & --
  & 6.15 & 18.14 & 7.54
  & 9.12 & 27.30 & 11.06 \\
\midrule

\multirow{28}{*}{\rotatebox[origin=c]{90}{\textbf{4-bit}}}

& \multirow{7}{*}{\shortstack[l]{A: INT4 \\ W: INT4 \\ $(G{=}64)$}}
  & RTN & \cmark
    & 7.47 & 49.44 & 8.39
    & 13.50 & 74.78 & 14.83 \\
& & AWQ+ & \xmark
    & 7.36 & 38.80 & 8.19
    & 13.21 & 59.20 & 14.43 \\
& & SmoothQuant & \xmark
    & 7.10 & 30.67 & 8.12
    & 12.74 & 50.73 & 14.21 \\
& & SINQ+ & \cmark
    & 7.28 & 30.21 & 8.02
    & 12.94 & 49.05 & 14.15 \\
& & QuaRot & \cmark
    & \bfseries 6.97 & 48.47 & 8.86
    & \bfseries 12.53 & 75.07 & 15.11 \\
& & \hl\method{} & \hl\cmark
    & \hl 7.38 & \hl \bfseries 29.91 & \hl \bfseries 7.99
    & \hl 13.11 & \hl \bfseries 47.87 & \hl \bfseries 14.14 \\
\cmidrule(lr){2-10}

& \multirow{7}{*}{\shortstack[l]{A: MXINT4 \\ W: MXINT4 \\ $(G{=}32)$}}
  & RTN & \cmark
    & 7.62 & 48.48 & 8.61
    & 13.60 & 74.73 & 15.22 \\
& & AWQ+ & \xmark
    & 7.52 & 38.61 & 8.44
    & 13.39 & 60.99 & 14.73 \\
& & SmoothQuant & \xmark
    & 7.36 & 33.63 & 8.42
    & 13.08 & 52.34 & 14.75 \\
& & SINQ+ & \cmark
    & 9.65 & 353.83 & 10.55
    & 16.75 & 902.98 & 17.48 \\
& & QuaRot & \cmark
    & 7.64 & 131.31 & 9.95
    & 13.58 & 202.08 & 17.01 \\
& & \hl\method{} & \hl\cmark
    & \hl \bfseries 7.59 & \hl \bfseries 32.99 & \hl \bfseries 8.42
    & \hl \bfseries 13.45 & \hl \bfseries 52.79 & \hl \bfseries 14.79 \\
\cmidrule(lr){2-10}

& \multirow{7}{*}{\shortstack[l]{A: MXFP4 \\ W: MXFP4 \\ $(G{=}32)$}}
  & RTN & \cmark
    & 7.79 & 29.37 & 8.70
    & 13.82 & 46.71 & \bfseries 15.14 \\
& & AWQ+ & \xmark
    & 7.62 & 29.88 & 8.62
    & 13.59 & 50.44 & 15.05 \\
& & SmoothQuant & \xmark
    & 7.68 & 28.66 & 8.53
    & 13.63 & 47.83 & 14.95 \\
& & SINQ+ & \cmark
    & 10.05 & 195.98 & 11.16
    & 17.63 & 395.39 & 18.30 \\
& & QuaRot & \cmark
    & \bfseries 7.38 & 68.96 & 9.45
    & \bfseries 13.22 & 109.84 & 16.40 \\
& & \hl\method{} & \hl\cmark
    & \hl 7.45 & \hl \bfseries 27.46 & \hl \bfseries 8.59
    & \hl 13.49 & \hl \bfseries 42.59 & \hl 15.16 \\
\cmidrule(lr){2-10}

& \multirow{7}{*}{\shortstack[l]{A: NVFP4 \\ W: NVFP4 \\ $(G{=}16)$}}
  & RTN & \cmark
    & 7.07 & 26.12 & 7.93
    & 12.70 & 40.75 & 13.98 \\
& & AWQ+ & \xmark
    & 6.80 & 25.60 & 7.95
    & 12.24 & 41.95 & 13.91 \\
& & SmoothQuant & \xmark
    & 6.70 & 24.16 & 7.73
    & 12.17 & 37.00 & 13.64 \\
& & SINQ+ & \cmark
    & Div$^\dagger$ & Div$^\dagger$ & Div$^\dagger$
    & Div$^\dagger$ & Div$^\dagger$ & Div$^\dagger$ \\
& & QuaRot & \cmark
    & 12.11 & 173.38 & Div$^\dagger$
    & 26.37 & 280.53 & Div$^\dagger$ \\
& & \hl\method{} & \hl\cmark
    & \hl \bfseries 6.53 & \hl \bfseries 23.77 & \hl \bfseries 7.75
    & \hl \bfseries 11.81 & \hl \bfseries 36.64 & \hl \bfseries 13.66 \\

\bottomrule
\end{tabular}%
}
\end{table}

%% file: tables/downstream_qwen_appndx.tex
\begin{table}[t]
\centering
\caption{\green{Qwen3-14B} downstream accuracy (0-shot, \%, higher is better $\uparrow$) for RTN, AWQ+, SmoothQuant, SINQ+, \method{}, and QuaRot, across all 4 weight formats; QuaRot is omitted entirely for NVFP4, since QuaRot's Hadamard rotation redistributes activation magnitude in a way that collapses NVFP4's narrow-range E4M3 block scale. \textbf{Bold}: best calibration-free (CF) result per column within each model+format block. \cmark: calibration-free. \xmark: requires calibration data.}
\label{tab:downstream_qwen}
\setlength{\tabcolsep}{4pt}
\renewcommand{\arraystretch}{1.1}
\resizebox{0.8\textwidth}{!}{%
\begin{tabular}{l l c r r r r r}
\toprule
\multicolumn{8}{c}{\textbf{\green{Qwen3-14B}}} \\
\midrule
Format & Method & CF & MMLU & HellaSwag & PIQA & ARC-Easy & \green{WinoGrande} \\
\midrule

\multirow{7}{*}{INT4 (G=64)}
  & RTN & \cmark &  70.33 & 75.23 & 77.64 &  77.36 & 69.06 \\
& AWQ+ & \xmark & 71.92 & 75.03 & 77.86 & 79.08 & 69.53 \\
& SmoothQuant & \xmark & 73.04 & 76.74 & 77.91 & 80.77 & 71.51 \\
& SINQ+ & \cmark & 72.30 &  76.23 & \bfseries 77.80 & 79.59 & 70.17 \\
& QuaRot & \cmark & 71.30 & 73.52 & 76.77 & 77.99 & 67.88 \\
& \hl\method{} & \hl\cmark & \hl \bfseries 72.33 & \hl \bfseries 76.54 & \hl 76.71 & \hl \bfseries 80.51 & \hl \bfseries 71.09 \\
\midrule
\multirow{7}{*}{MXINT4 (G=32)}
  & RTN & \cmark & 70.08 & 74.31 & 76.69 & 74.74 & 66.08 \\
& AWQ+ & \xmark & 72.42 & 74.93 & 77.37 & 77.53 & 66.93 \\
& SmoothQuant & \xmark & 72.50 & 75.84 & 77.97 & 78.62 & 66.30 \\
& SINQ+ & \cmark & 62.38 & 69.26 & 74.59 & 70.50 & 63.54 \\
& QuaRot & \cmark & 68.72 & 69.92 & 75.63 & 75.72 & 63.85 \\
& \hl\method{} & \hl\cmark & \hl \bfseries 71.47 & \hl \bfseries 75.27 & \hl \bfseries 77.04 & \hl \bfseries 76.68 & \hl \bfseries 70.64 \\
\midrule
\multirow{7}{*}{MXFP4 (G=32)}
  & RTN & \cmark & 72.36 & \bfseries 75.59 & 77.15 & 79.76 & 69.14 \\
& AWQ+ & \xmark & 72.99 & 75.89 & 77.15 & 79.46 & 70.72 \\
& SmoothQuant & \xmark & 73.26 & 75.53 & 77.97 & 79.46 & 69.14 \\
& SINQ+ & \cmark & 62.58 & 68.53 & 74.16 & 73.02 & 63.85 \\
& QuaRot & \cmark & 68.34 & 71.01 & 76.71 & 72.85 & 68.90 \\
& \hl\method{} & \hl\cmark & \hl \bfseries 72.91 & \hl  75.55 & \hl \bfseries 77.48 & \hl \bfseries 80.01 & \hl \bfseries 72.06 \\
\midrule
\multirow{5}{*}{NVFP4 (G=16)}
  & RTN & \cmark & 74.55 &  76.97 &  78.29 &  80.09 &  71.27 \\
& AWQ+ & \xmark & 74.84 & 77.32 & 78.78 & 81.73 & 71.03 \\
& SmoothQuant & \xmark & 75.97 & 77.55 & 78.94 & 82.79 & 72.77 \\
& SINQ+ & \cmark & 23.39 & 26.38 & 50.22 & 26.56 & 50.59 \\
& \hl\method{} & \hl\cmark & \hl \bfseries 75.06 & \hl \bfseries 77.09 & \hl \bfseries 78.99 & \hl \bfseries 81.90 & \hl \bfseries 72.64 \\

\bottomrule
\end{tabular}}
\end{table}

%% file: tables/ppl_8bit_weight_activation.tex
\begin{table}[ht]
\centering
\caption{WikiText-2 perplexity ($\downarrow$) for 8-bit W+A
quantization. \method{} applies $\boldsymbol{\beta}$-rescaling
without calibration data. SmoothQuant uses 256 calibration
samples. SINQ+ is our extension of SINQ to \green{W+A}. \textbf{Bold}: best calibration-free (\textbf{CF}) result.
\cmark: calibration-free. \xmark: requires calibration data.}
\label{tab:ppl_8bit_wa}
\setlength{\tabcolsep}{4pt}
\renewcommand{\arraystretch}{1.1}
\resizebox{.8\textwidth}{!}{%
\begin{tabular}{@{}cllc cccc@{}}
\toprule
\textbf{Bits} & \textbf{Format} &
\textbf{Method} & \textbf{CF} &
\textbf{Qwen2.5-7B} & \textbf{Qwen3-0.6B} &
\textbf{Qwen3-14B} & \textbf{Llama-3.1-8B} \\
\midrule
\multicolumn{3}{l}{BF16 (reference)} & --
  & 6.15 & 18.14 & 7.54 & 5.62 \\
\midrule

\multirow{12}{*}{\rotatebox[origin=c]{90}{\textbf{8-bit}}}

& \multirow{4}{*}{\shortstack[l]{
    A: \hspace{1.6pt}INT8 \\ W: INT8 \\ $(G{=}128)$}}
  & RTN          & \cmark
    & \textbf{6.27} & 18.42 & 7.55 & 5.72 \\
& & SmoothQuant  & \xmark
    & 6.25 & 18.28 & 7.50 & 5.70 \\
& & SINQ+         & \cmark
    & 6.67 & 18.33 & \textbf{7.49} & 5.70 \\
& & \hl\method{}    & \hl\cmark
    & \hl 6.64 & \hl\textbf{18.24} & \hl7.77 & \hl\textbf{5.69} \\
\cmidrule(lr){2-8}

& \multirow{4}{*}{\shortstack[l]{
    A: \hspace{1.6pt}MXINT8 \\ W: MXINT8 \\ $(G{=}32)$}}
  & RTN          & \cmark
    & 6.15 & \textbf{18.06} & 7.54 & 5.63 \\
& & SmoothQuant  & \xmark
    & 6.15 & 18.18 & 7.55 & 5.63 \\
& & SINQ+         & \cmark
    & 7.37 & 47.87 & 9.04 & 9.51 \\
& & \hl\method{}    & \hl\cmark
    & \hl\textbf{6.15} & \hl 18.13 & \hl\textbf{7.53} & \hl\textbf{5.63} \\
\cmidrule(lr){2-8}

& \multirow{4}{*}{\shortstack[l]{
    A: \hspace{1.6pt}MXFP8 \\ W: MXFP8 \\ $(G{=}32)$}}
  & RTN          & \cmark
    & 6.22 & 18.54 & \textbf{7.62} & 5.72 \\
& & SmoothQuant  & \xmark
    & 6.22 & 18.46 & 7.63 & 5.71 \\
& & SINQ+         & \cmark
    & 7.49 & 53.41 & 9.21 & 10.48 \\
& & \hl\method{}    & \hl\cmark
    & \hl\textbf{6.21} & \hl\textbf{18.23} & \hl 7.64 & \hl\textbf{5.71} \\
\bottomrule
\end{tabular}%
}
\end{table}

%% file: appendix/weight_only_results.tex
\section{Weight-only quantization results}
\label{app:weight_only}

While the primary contribution of \method{} is calibration-free
joint W+A quantization, \mbox{Table~\ref{tab:ppl_weight_only}}
evaluates its weight-only performance, with activations
kept in BF16, to assess whether the MSE-based dual scaling
also yields competitive weight reconstruction in isolation.
\method{} \green{achieves the best calibration-free results on nearly all format--model combinations (on Qwen3-14B it is marginally behind SINQ on INT4 and RTN on NVFP4)}, closely matching
calibration-based AWQ on INT4 and MX formats and remaining within a small margin of GPTQ without requiring any calibration data or Hessian computation. 
While GPTQ relies on costly second-order weight updates and is inapplicable when training data is proprietary, \method{} operates purely from weight 
statistics, yet closely follows its perplexity in INT4 and MX formats and 
\green{\emph{outperforms} it on NVFP4 for three of the four models}.

On MXINT4 and MXFP4, \method{} substantially outperforms
SINQ, whose format-unaware scales cause severe degradation
(e.g., $112.00$ and $94.84$ on Qwen3-0.6B for MXINT4 and
MXFP4 respectively). On NVFP4, \method{} achieves the
strongest results among all methods on Qwen2.5-7B ($6.33$)
and Qwen3-0.6B ($19.82$). 


These results confirm that \method{} is highly effective in the  weight-only setting, with MSE minimization directly translating into measurable perplexity gains. More significantly, the same  optimization simultaneously enables calibration-free W+A quantization (Table~\ref{tab:ppl_wa_llama}), distinguishing \method{} from prior weight-only methods.


Importantly, the SINQ results in
Table~\ref{tab:ppl_weight_only} use the original SINQ
algorithm~\citep{muller2025sinq} with no activation scaling
involved. For MX and NV formats the only modification is projecting
SINQ's output scales to the target data format after the
algorithm completes, as there is no other way to deploy
them in these formats. Even in this purely weight-only setting,
\method{} \green{outperforms SINQ on nearly all formats
and models}. Notably, on INT4, where SINQ operates exactly as designed
with unconstrained BF16 scales, \method{} achieves better
perplexity on three out of four models. On MX and NV formats
the gap widens dramatically, confirming that \method{} provides better generalizability to block quantization formats. 

\input{tables/ppl_4bit_weight_only}

%% file: tables/ppl_4bit_weight_only.tex
\begin{table}[ht]
\centering
\caption{WikiText-2 perplexity ($\downarrow$) for 4-bit
weight-only quantization. Activations in BF16.
\method{} applies $\boldsymbol{\beta}$-rescaling without
calibration data. AWQ and GPTQ use 256 calibration samples.
($\dagger$): degenerate result (PPL$>1000$); method incompatible with format.
\textbf{Bold}: best calibration-free (\textbf{CF}) result.
\cmark: calibration-free. \xmark: requires calibration data.}
\label{tab:ppl_weight_only}
\setlength{\tabcolsep}{4pt}
\renewcommand{\arraystretch}{1.1}
\resizebox{0.8\textwidth}{!}{%
\begin{tabular}{@{}cllc cccc@{}}
\toprule
\textbf{Bits} & \textbf{Format} &
\textbf{Method} & \textbf{CF} &
\textbf{Qwen2.5-7B} & \textbf{Qwen3-0.6B} &
\textbf{Qwen3-14B} & \textbf{Llama-3.1-8B} \\
\midrule
\multicolumn{3}{l}{BF16 (reference)} & --
  & 6.15 & 18.14 & 7.54 & 5.62 \\
\midrule

\multirow{20}{*}{\rotatebox[origin=c]{90}{\textbf{4-bit}}}

& \multirow{5}{*}{\shortstack[l]{
    W: INT4 \\ $(G{=}64)$}}
  & RTN          & \cmark
    & 6.67 & 26.33 & 7.93 & 6.32 \\
& & AWQ          & \xmark
    & 6.50 & 21.83 & 7.72 & 6.12 \\
& & GPTQ         & \xmark
    & 6.44 & 20.39 & 7.73 & 5.56 \\
& & SINQ         & \cmark
    & 6.57 & 22.21 & \textbf{7.76} & 6.16 \\
& & \hl\method{}    & \hl\cmark
    & \hl\textbf{6.55} & \hl\textbf{21.35} & \hl7.88 & \hl\textbf{6.12} \\
\cmidrule(lr){2-8}

& \multirow{5}{*}{\shortstack[l]{
    W: MXINT4 \\ $(G{=}32)$}}
  & RTN          & \cmark
    & 6.66 & 25.99 & 7.98 & 6.41 \\
& & AWQ          & \xmark
    & 6.49 & 21.90 & 7.85 & 6.16 \\
& & GPTQ         & \xmark
    & 6.49 & 20.09 & 7.79 & 6.25 \\
& & SINQ         & \cmark
    & 7.98 & 112.00 & 9.67 & 13.09 \\
& & \hl\method{}    & \hl\cmark
    & \hl\textbf{6.60} & \hl\textbf{23.02} & \hl\textbf{7.89} & \hl\textbf{6.19} \\
\cmidrule(lr){2-8}

& \multirow{5}{*}{\shortstack[l]{
    W: MXFP4 \\ $(G{=}32)$}}
  & RTN          & \cmark
    & 6.79 & 22.38 & 8.04 & 6.29 \\
& & AWQ          & \xmark
    & 6.58 & 20.98 & 7.94 & 6.31 \\
& & GPTQ         & \xmark
    & 6.47 & 19.79 & 7.79 & 6.11 \\
& & SINQ         & \cmark
    & 8.03 & 94.84 & 9.85 & 16.16 \\
& & \hl\method{}    & \hl\cmark
    & \hl\textbf{6.63} & \hl\textbf{22.35} & \hl\textbf{7.95} & \hl\textbf{6.19} \\
\cmidrule(lr){2-8}

& \multirow{5}{*}{\shortstack[l]{
    W: NVFP4 \\ $(G{=}16)$}}
  & RTN          & \cmark
    & 6.85 & 23.35 & \textbf{7.81} & 8.12 \\
& & AWQ          & \xmark
    & 6.61 & 22.56 & 7.89 & 7.93   \\
& & GPTQ         & \xmark
    & 6.55 & 20.41 & 7.74 & 7.61 \\
& & SINQ         & \cmark
    & {Div$\dagger$}  & {Div$\dagger$}    & {Div$\dagger$}   & {Div$\dagger$}   \\
& & \hl\method{}    & \hl\cmark
    & \hl\textbf{6.33} & \hl\textbf{19.82} & \hl 8.04 & \hl\textbf{6.43} \\
\bottomrule
\end{tabular}%
}
\end{table}

%% file: appendix/packed_w4a4.tex
\section{Real packed W4A4 deployment}
\label{app:packed-w4a4}

\paragraph{Implementation.}

The packed inference path targets INT4 with group size $G{=}64$ and is built on the packed W4A4 GEMM backend from Nunchaku/SVDQuant~\citep{li2025svdquant}. The quantized weights, $\hat{Q}$, and scales produced by Algorithm~\ref{alg:dual_scale} for all $224$ linear layers of Llama-3.1-8B~\citep{dubey2024llama3} ($7$ projections $\times$ $32$ transformer blocks) are packed offline into the target storage layout and executed through this backend. Embedding, lm\_head, and normalization layers remain in BF16. We extend the backend with two components: (i) a CUDA kernel that incorporates the column scale $\beta$ into activation quantization, avoiding a separate kernel launch, and (ii) an allocation-free GEMM entry point that enables CUDA Graph capture for the full model.

\paragraph{Evaluation setup.}

All measurements are performed on a single NVIDIA RTX 4090~\citep{nvidia_rtx4090} with $24$~GB of memory. WikiText-2 perplexity is evaluated at sequence length $\mathrm{seq}=8192$. Latency and memory are measured using an $82$-token prompt for prefill without a KV cache and $100$ generated tokens for decode, corresponding to a $182$-token KV cache. All reported measurements are collected from warm runs and remain stable within $0.2\%$ across repetitions. 

\paragraph{Accuracy.}
The packed W4A4 implementation achieves a WikiText-2 perplexity of $6.74$, compared with $6.73$ under fake quantization (Table~\ref{tab:ppl_wa_llama}), a difference of only $0.01$. The BF16 baseline measured using the same evaluation harness has a perplexity of $5.61$, compared with $5.62$ reported in Table~\ref{tab:ppl_wa_llama}. 

\paragraph{Latency breakdown.}
Table~\ref{tab:packed-latency} reports the GPU time for individual components under BF16 and \green{\method{}} W4A4 with INT4 ($G{=}64$) format. Quantized weights remain packed throughout inference, and group scales are applied within the GEMM, eliminating runtime weight dequantization. Activation quantization, including the $\beta$ pre-scaling operation, costs $0.70$--$3.09$~ms and accounts for $4.3$--$6.1\%$ of total W4A4 GPU time, constituting the only per-step quantization overhead. The largest reduction comes from the quantized linear layers, whose execution time decreases from $17.33$ to $8.82$~ms for prefill with batch size $1$ and from $121.49$ to $32.98$~ms for prefill with batch size $16$. The
remaining computation consists primarily of components retained in BF16, including lm\_head, attention, normalization, RoPE, residual operations, and KV-cache updates, which limits the achievable decode speedup. Note that component-level timings are collected in a separate instrumented run; the associated measurement overhead makes the totals in Table~\ref{tab:packed-latency} slightly larger than the end-to-end times reported in Table~\ref{tab:packed-w4a4}.

\begin{table}[t]
\centering
\small
\caption{GPU time (ms) per component for BF16 and \method{} W4A4, INT4 ($G{=}64$), on \mbox{Llama-3.1-8B} using an RTX 4090.}
\label{tab:packed-latency}
\resizebox{.9\textwidth}{!}{
\begin{tabular}{lcc|cc|cc|cc}
\toprule
\multirow{2}{*}{\textbf{Component}} & \multicolumn{2}{c|}{\textbf{Prefill ($b{=}1$)}}
& \multicolumn{2}{c|}{\textbf{Prefill ($b{=}16$)}}
& \multicolumn{2}{c|}{\textbf{Decode ($b{=}1$)}}
& \multicolumn{2}{c}{\textbf{Decode ($b{=}16$)}} \\
 & \textbf{BF16} & \hl \textbf{W4A4} & \textbf{BF16} & \hl \textbf{W4A4} & \textbf{BF16} & \hl \textbf{W4A4} & \textbf{BF16} & \hl \textbf{W4A4} \\
\midrule
Quantized linears (224) & 17.33 & \hl 8.82 & 121.49 & \hl 32.98 & 14.75 & \hl 8.24 & 16.32 & \hl 9.18 \\
Activation quantization (incl.\ $\beta$) & -- & \hl 0.82 & -- & \hl 3.09 & -- & \hl 0.70 & -- & \hl 0.72 \\
lm\_head (BF16) & 1.19 & \hl 1.19 & 9.26 & \hl 9.26 & 1.07 & \hl 1.07 & 1.10 & \hl 1.10 \\
Attention & 0.19 & \hl 0.20 & 0.93 & \hl 0.94 & 0.42 & \hl 0.42 & 1.73 & \hl 1.74 \\
Norms / RoPE / KV cache / residual & 2.43 & \hl 2.33 & 14.03 & \hl 13.37 & 1.60 & \hl 1.64 & 4.18 & \hl 4.14 \\
\midrule
\textbf{Total GPU time}
& \textbf{21.14} & \hl \textbf{13.36}
& \textbf{145.71} & \hl \textbf{59.64}
& \textbf{17.84} & \hl \textbf{12.07}
& \textbf{23.33} & \hl \textbf{16.88} \\
\bottomrule
\end{tabular}
}
\end{table}

\paragraph{Memory breakdown.}
\label{app:memory_overhead}

Table~\ref{tab:packed-memory} provides an end-to-end memory breakdown of the \method{} packed deployment for Llama-3.1-8B when quantized to INT4 ($G{=}64$). Quantization reduces the storage for the $224$ linear layers from $13.96$~GB to $3.49$~GB, with an additional $0.22$~GB required for per-group weight scales. Together with the unquantized BF16 embedding and lm\_head ($2.10$~GB), the total weight footprint decreases from $16.20$~GB to $5.92$~GB. The remaining activation and \mbox{KV-cache} components stay in BF16 and therefore have the same footprint in both paths. Prefill uses \texttt{use\_cache=False}, so no KV cache is allocated; its activation footprint is dominated by the $[\mathrm{seq}\times128256]$ logits tensor. Consequently, memory usage for prefill with batch size $1$ is slightly lower than for decode with batch size $1$.

\begin{table}[t]
\centering
\small
\caption{End-to-end memory breakdown (GB) for BF16 and \method{} W4A4 on Llama-3.1-8B using an RTX 4090.}
\label{tab:packed-memory}
\resizebox{.9\textwidth}{!}{
\begin{tabular}{lcc|cc|cc|cc}
\toprule
\multirow{2}{*}{\textbf{Component}} & \multicolumn{2}{c|}{\textbf{Prefill ($b{=}1$)}}
& \multicolumn{2}{c|}{\textbf{Prefill ($b{=}16$)}}
& \multicolumn{2}{c|}{\textbf{Decode ($b{=}1$)}}
& \multicolumn{2}{c}{\textbf{Decode ($b{=}16$)}} \\
 & \textbf{BF16} & \hl \textbf{W4A4} & \textbf{BF16} & \hl \textbf{W4A4} & \textbf{BF16} & \hl \textbf{W4A4} & \textbf{BF16} & \hl \textbf{W4A4} \\
\midrule
Quantized linear weights
& 13.96 & \hl 3.49
& 13.96 & \hl 3.49
& 13.96 & \hl 3.49
& 13.96 & \hl 3.49 \\
Per-group weight scales (BF16)
& -- & \hl 0.22
& -- & \hl 0.22
& -- & \hl 0.22
& -- & \hl 0.22 \\
Embedding + lm\_head (BF16)
& 2.10 & \hl 2.10
& 2.10 & \hl 2.10
& 2.10 & \hl 2.10
& 2.10 & \hl 2.10 \\
Runtime/allocator overhead
& 0.14 & \hl 0.11
& 0.14 & \hl 0.11
& 0.14 & \hl 0.11
& 0.14 & \hl 0.11 \\
\midrule
\textbf{Weights subtotal}
& \textbf{16.20} & \hl \textbf{5.92}
& \textbf{16.20} & \hl \textbf{5.92}
& \textbf{16.20} & \hl \textbf{5.92}
& \textbf{16.20} & \hl \textbf{5.92} \\
First-forward scratch
& 0.10 & \hl 0.30
& 0.10 & \hl 0.30
& 0.10 & \hl 0.30
& 0.10 & \hl 0.30 \\
KV cache
& 0 & \hl 0
& 0 & \hl 0
& 0.02 & \hl 0.02
& 0.38 & \hl 0.38 \\
Other activations
& 0.03 & \hl 0.03
& 0.36 & \hl 0.36
& 0.02 & \hl 0.02
& 0.07 & \hl 0.07 \\
\midrule
\textbf{Total}
& \textbf{16.33} & \hl \textbf{6.25}
& \textbf{16.66} & \hl \textbf{6.58}
& \textbf{16.34} & \hl \textbf{6.26}
& \textbf{16.75} & \hl \textbf{6.67} \\
\bottomrule
\end{tabular}}
\end{table}

%% file: tables/licenses_table.tex

\begin{table}[h]
\centering
\caption{Licenses of existing assets used in this work.}
\label{tab:licenses}
\small
\setlength{\tabcolsep}{4pt}
\resizebox{.9\textwidth}{!}{%
\begin{tabular}{@{}p{1.5cm}p{12cm}@{}}
\toprule
\textbf{Category} & \textbf{Assets and Licenses} \\
\midrule
\textit{Models} & 
Llama-2-7B (Llama 2 Community); 
Llama-3.1-8B (Llama 3.1 Community); 
Llama-3.2-1B (Llama 3.2 Community); 
Qwen2-1.5B, Qwen2.5-7B, Qwen3-0.6B, Qwen3-14B (Apache 2.0) \\
\midrule
\textit{Datasets} & 
WikiText-2 (CC BY-SA 4.0); 
C4 (ODC-BY 1.0); 
MMLU (MIT);
HellaSwag (MIT); 
PIQA (AFL-3.0);
ARC-Easy (CC BY-SA 4.0);
WinoGrande (CC-BY) \\
\bottomrule
\end{tabular}
}
\end{table}